\documentclass{article}

\PassOptionsToPackage{numbers}{natbib}

\usepackage[preprint]{neurips_2026}

\usepackage[utf8]{inputenc} 
\usepackage[T1]{fontenc}    
\usepackage{hyperref}       
\usepackage{url}            
\usepackage{booktabs}       
\usepackage{amsfonts}       
\usepackage{nicefrac}       
\usepackage{microtype}      
\usepackage{xcolor}         

\usepackage{amsmath}
\usepackage{amssymb}
\usepackage{mathtools}
\usepackage{amsthm}
\usepackage{mathabx}
\usepackage{bbm}
\usepackage{mathrsfs}
\usepackage{blkarray}
\usepackage{bm}
\usepackage{cancel}
\usepackage{xltabular}
\usepackage{abbreviation_definitions}

\usepackage{enumerate}
\usepackage{enumitem}

\usepackage{caption}
\usepackage{graphicx}
\usepackage{subcaption}
\usepackage{multirow}
\usepackage{wrapfig}

\usepackage{mwe}

\usepackage{tikz}
\usetikzlibrary{shapes,arrows,positioning,fit,decorations.pathreplacing,calc}

\usepackage{todonotes}
\usepackage{placeins}
\usepackage{algorithm}
\usepackage{algpseudocode}

\usepackage[most]{tcolorbox}
\usepackage{tabularx}
\usepackage{colortbl}
\usepackage{supertabular}

\definecolor{tracegray}{HTML}{F7F7F7}
\definecolor{traceblue}{HTML}{EEF5FF}
\definecolor{traceorange}{HTML}{FFF3D6}

\newcolumntype{Y}{>{\raggedright\arraybackslash}X}

\newtcolorbox{tracecasebox}[2][]{%
  enhanced,
  breakable,
  colback=white,
  colframe=black!20,
  colbacktitle=traceblue,
  coltitle=black,
  fonttitle=\bfseries,
  boxrule=0.4pt,
  arc=1.5pt,
  left=1.5mm,
  right=1.5mm,
  top=1mm,
  bottom=1mm,
  title={#2},
  #1
}

\newtcolorbox{traceexcerpt}[1][]{%
  enhanced,
  colback=tracegray,
  colframe=black!15,
  boxrule=0.3pt,
  arc=1pt,
  left=1.5mm,
  right=1.5mm,
  top=1mm,
  bottom=1mm,
  #1
}

\definecolor{nfpoPurple}{HTML}{9C1BA3}

\title{Multi-Step Likelihood-Ratio Correction for Reinforcement Learning with Verifiable Rewards}

\author{
Deokgyu Yoon$^{1}$\thanks{Equal contribution. Code is available at 
\url{https://github.com/oh-lab/NFPO}.} \quad
Hyungkyu Kang$^{2}$\footnotemark[1] \quad
Joongkyu Lee$^{1}$\footnotemark[1] \quad
Byeongchan Kim$^{1}$ \\
\textbf{Gyungin Shin}$^{2}$ \quad
\textbf{Sungrae Park}$^{2}$ \quad
\vspace{0.2cm}
\textbf{Min-hwan Oh}$^{1}$ \\
$^{1}$Seoul National University \qquad
\vspace{0.2cm}
$^{2}$Upstage \\
{\small\texttt{\{hhbdgy,jklee0717,bckim97,minoh\}@snu.ac.kr}} \\
{\small\texttt{\{hkang,noel,sungrae.park\}@upstage.ai}}
}

\begin{document}

\maketitle

\begin{figure}[h!]
    \centering
    \includegraphics[width=0.9\linewidth]{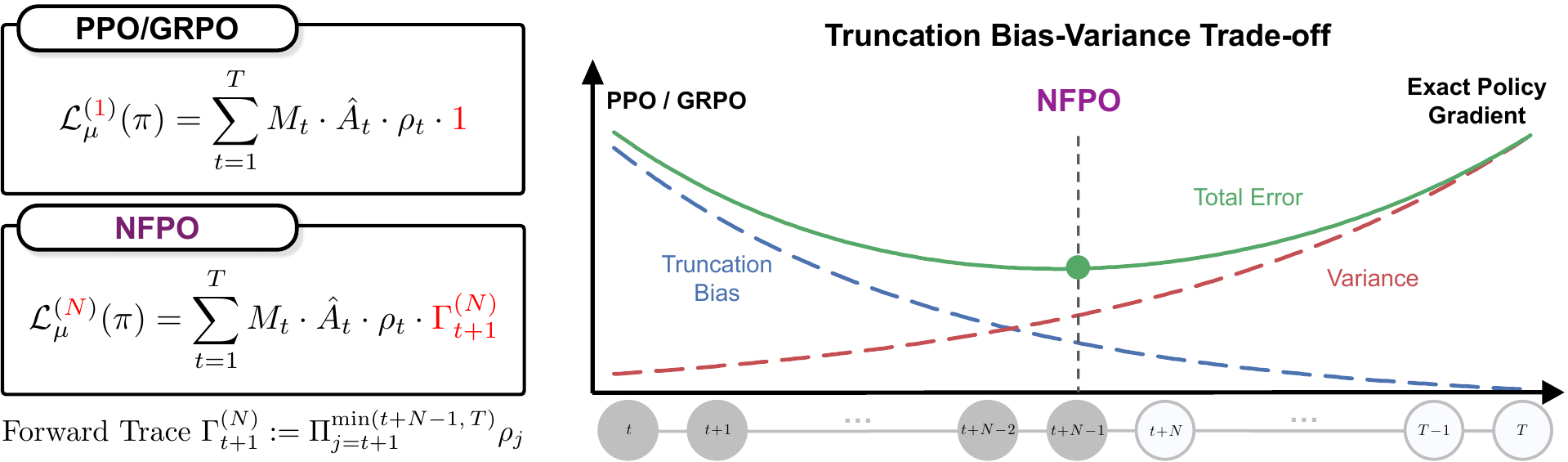}
    \caption{{\small 
    \textcolor{nfpoPurple}{\textbf{\AlgName{}}} corrects the bias of \textit{local} PPO/GRPO-style objectives by reweighting each token likelihood ratio 
    $\rho_t := \frac{\pi(y_t|s_t)}{\mu(y_t|s_t)}$ 
    with a \textit{forward trace} $\Gamma^{(N)}_{t+1}$.
    (Right) The trace horizon $N$ controls the bias--variance trade-off, interpolating between PPO/GRPO ($N=1$) and the \textit{exact} policy gradient objective ($N=T$).
    }}
    \label{fig:ablation_main}
\end{figure}
\begin{abstract}
Reinforcement learning with verifiable rewards (RLVR) plays a pivotal role in improving the reasoning ability of large language models.
However, widely used PPO surrogate objectives are fundamentally \textit{local}, as they rely on a local approximation of the exact policy gradient objective.
While this approximation improves stability by reducing the variance induced by importance sampling, it also introduces structural bias into the surrogate objective, which must be controlled through trust region mechanisms.
In this work, we introduce the $N$-step forward trace, which augments the PPO surrogate objective using the cumulative likelihood ratio of the next $N-1$ tokens.
Building on this idea, we propose $N$-Step Forward-Trace Policy Optimization (\AlgName{}), a practical RLVR algorithm that integrates the $N$-step forward trace into the masked policy gradient framework.
\AlgName{} provides a continuous bridge between the PPO surrogate objective and the exact policy gradient objective, offering a principled mechanism for controlling the bias–variance trade-off.
Our theoretical analysis shows that, with an appropriate choice of $N$, the proposed objective yields a tighter policy-improvement bound than the standard PPO surrogate.
Experiments on comprehensive reasoning benchmarks demonstrate that \AlgName{} consistently improves performance, supporting our theoretical findings.

\end{abstract}

\section{Introduction}
\label{sec:Introduction}
Reinforcement learning with verifiable rewards (RLVR) has become a central paradigm for improving the reasoning ability of large language models (LLMs)~\citep{shao2024deepseekmath,guo2025deepseek, yu2025dapo}. 
By optimizing model-generated responses against verifier-provided rewards, \emph{Proximal Policy Optimization} (PPO)-style policy optimization~\citep{schulman2017proximal}  has been widely adopted in recent RLVR methods, including Group Relative Policy Optimization (GRPO)~\citep{shao2024deepseekmath,guo2025deepseek}, yielding strong empirical gains on complex reasoning tasks.

Despite these successes, existing PPO-style surrogate objectives remain fundamentally \textit{local} in how they account for policy change.
Fix a prompt \(x\). 
Let \(y=(y_1,\ldots,y_T)\) denote a response, \(\mu\) the rollout policy, and \(\pi\) the target policy.
The PPO surrogate objective optimizes a sum of token-level surrogate terms, each using the current-token likelihood ratio \(\rho_t:=\frac{\pi(y_t | x,y_{<t})}{\mu(y_t | x,y_{<t})}\) together with a response-level reward \(R(y)\) or advantage.
This locality makes training simple and stable, but it omits the effect of the updated policy on the probability of the remaining generated tokens.

This limitation becomes explicit through the exact policy-improvement identity (Lemma~\ref{lemma:pdl}).
For the expected reward \(\Jcal(\pi):=\EE_{y\sim\pi}[R(y)]\), the following holds:
\[
    \Jcal(\pi)-\Jcal(\mu)
    =
    \EE_{y\sim\mu}
    \left[
        R(y)
        \sum_{t=1}^{T}
        \bigl(\rho_t-1\bigr)
        \Gamma_{t+1}
    \right],
    \qquad
    \text{where}\quad
    \Gamma_{t+1}
    :=
    \prod_{j=t+1}^{T}\rho_j.
\]
Here, \(\Gamma_{t+1}\) is the forward likelihood-ratio correction for the tokens generated after step \(t\).
Throughout the paper, we call \(\Gamma_{t+1}\) the \textit{full forward
trace}.\footnote{
This terminology is descriptive and should not be confused with \textit{eligibility traces} in
temporal-difference learning~\citep{sutton2018reinforcement}. Eligibility traces are additive,
recursively decayed variables for backward credit assignment, whereas our forward trace is a multiplicative likelihood-ratio correction over the future tokens
\(y_{t+1:T}\).
}
The term ``trace'' emphasizes that this factor follows the generated trajectory forward from token
\(t+1\) to the end of the response, multiplying the likelihood-ratio corrections along the way.

The PPO surrogate objective corresponds to the case \(\Gamma_{t+1} = 1\), yielding a biased approximation of the exact policy gradient objective. 
In particular, the omitted forward correction is governed by the deviation between the future trajectory distributions induced by \(\mu\) and \(\pi\): conditional on \(s_{t+1}\), \(\EE_{\mu}[|\Gamma_{t+1}-1|\mid s_{t+1}]\) is controlled by the total variation distance between these future distributions. 
Therefore, under the uniform trust-region condition \(\sup_s D_{\mathrm{TV}}(\mu(\cdot | s),\pi(\cdot | s))\le \delta\), the resulting approximation error is bounded by \(\BigO(T^2\delta^2)\) (Proposition~\ref{prop:policy_improvement_bound}). 
Thus, the trust region controls not only optimization stability, but also the structural bias induced by localizing the exact objective. 
In practice, PPO-based RLVR methods approximate this trust-region control through ratio-based clipping~\citep{schulman2017proximal,shao2024deepseekmath,guo2025deepseek} or token-level update masking~\citep{team2025every,qi2026rethinking}.

However, trust-region mechanisms only indirectly compensate for the omitted full forward trace; they do not recover the missing forward likelihood-ratio correction itself. 
This leaves a structural bias--variance dilemma in policy improvement: the local surrogate is \textit{stable} but \textit{biased}, whereas the full forward trace is \textit{unbiased} but can yield highly \textit{unstable empirical estimates}, especially in long-horizon generation. 
Indeed, empirical optimization relies on a finite batch of rollouts, and the full trace \(\Gamma_{t+1}\) can vary sharply across samples because it multiplies many token-level likelihood ratios. 
Consequently, a small number of trajectories with large forward traces can dominate the empirical gradient, leading to high variance and unstable updates.

This motivates an intermediate objective that recovers part of the missing forward trace while avoiding the empirical instability of the full trace.
To address this dilemma, we introduce \textbf{\(\mathbf{N}\)-step forward trace}.
Rather than either discarding the forward trace correction entirely or using the full correction, we retain only the next \(N-1\) future likelihood ratios after each token, i.e., 
\(\Gamma_{t+1}^{(N)}:=\prod_{j=t+1}^{\min\{t+N-1,T\}}\rho_j\).
This construction interpolates between the PPO surrogate and the exact policy gradient objective: \(N=1\) gives \(\Gamma_{t+1}^{(1)}=1\), while \(N=T\) recovers \(\Gamma_{t+1}^{(T)}=\Gamma_{t+1}\).
Our main theorem (Theorem~\ref{thm:n_step_policy_improvement}) shows that increasing \(N\) reduces the \textit{truncation bias} at the rate \(\BigO((T-N)^2\delta^2)\), but increases the \textit{variance} exponentially in \(N\), 
since longer forward traces involve products of more likelihood ratios.
Thus, the trace horizon \(N\) provides a principled knob for controlling the bias--variance trade-off in RLVR.

We further implement \AlgName{} within a \textit{masked} policy gradient framework~\citep{team2025every, qi2026rethinking}, and demonstrate through extensive reasoning benchmarks that \AlgName{} consistently outperforms or matches both GRPO and DPPO.
Beyond performance gains, we conduct a detailed mechanistic analysis of the forward trace.
Our analysis reveals that \AlgName{} selectively concentrates forward trace correction on reasoning-related tokens, thereby improving the learning of reasoning behaviors.
In addition, the forward trace acts as a semantic \textit{low-pass filter} by aggregating learning signals from future tokens, which stabilizes the noisy token-level likelihood ratios that arise in local surrogate objectives.
Finally, through experiments with varying trace horizons $N$, we empirically validate the bias--variance trade-off predicted by our main theorem.
Together, these analyses complement our theoretical results by explaining how forward trace correction alters the underlying training dynamics.

Our main contributions are summarized as follows:
\begin{itemize}
    \item 
    \textbf{\(\mathbf{N}\)-step bias--variance trade-off:}
    We revisit the standard PPO surrogate objective, and show that it discards the full forward trace in the exact policy-improvement identity, thereby inducing a structural bias (Proposition~\ref{prop:policy_improvement_bound}).
    We then prove that an \(N\)-step forward trace reduces \textit{truncation bias} at the cost of higher \textit{variance}, implying that an appropriately chosen trace horizon \(N\) can yield a tighter policy-improvement lower bound than either the local surrogate (e.g., PPO) or the exact policy gradient (Theorem~\ref{thm:n_step_policy_improvement}).

    \item
    \textbf{\underline{$\mathbf{N}$}-Step \underline{F}orward-Trace \underline{P}olicy \underline{O}ptimization (\AlgName):}
    Guided by the theory, we propose a practical policy-optimization algorithm that uses a truncated forward trace with an appropriate trace horizon \(N\) (Section~\ref{sec:alg}).
    The method bridges the stable but biased local PPO surrogate and the unbiased but high-variance exact policy gradient objective.
    On complex reasoning benchmarks, \AlgName{} consistently outperforms baselines (Section~\ref{sec:main_results}). 
    
    \item
    \textbf{Learning-dynamics analysis:}
    Our analyses show that forward trace amplifies token-level correction signals, concentrates updates on reasoning tokens, and reduces token-level learning noise.
    Experiments varying the trace horizon \(N\) further highlight the importance of choosing an appropriate \(N\), supporting the bias--variance trade-off predicted by our theory (Section~\ref{sec:analysis}).
    
\end{itemize}

\section{Preliminaries}
\label{sec:preliminaries}
In this section, we introduce the basic setup and notation for policy optimization. 
Given a prompt dataset \(\Dcal\), a prompt \(x \sim \Dcal\) is sampled, and a large language model generates a response \(y=(y_1,\dots,y_T)\) autoregressively according to the policy
$\pi(y | x)=\prod_{t=1}^T \pi(y_t | x, y_{<t}).$
A verifier then assigns a scalar reward \(R(x,y)\in[0,1]\) to each prompt-response pair \((x,y)\).
To simplify notation, throughout the paper we condition on a fixed prompt \(x\), suppress the dependence on \(x\), and write
\[
    \Jcal(\pi| x)
    :=
    \EE_{y\sim\pi(\cdot | x)}
    [R(x,y)]
    \equiv
    \Jcal(\pi)
    :=
    \EE_{y\sim\pi}
    [R(y)].
\]
Let \(\mu\) denote the behavior policy (for rollout) and \(\pi\) the target policy (to be optimized).
We define the token-level state at \(t\) as \(s_t := (x,y_{<t})\).
Then the following performance-difference identity holds:
\begin{lemma}[Performance difference lemma]
\label{lemma:pdl}
For any two policies \(\pi\) and \(\mu\), we have
\begin{equation}
\label{eq:pdl_exact}
    \Jcal(\pi)-\Jcal(\mu)
    =
    \EE_{y\sim\mu}
    \left[
    R(y)\sum_{t=1}^T (\rho_t -1)
    \Gamma_{t+1}
    \right],
    \,\,\,
    \text{where}\,\,
    \Gamma_{t+1} :=\! 
    \prod_{j=t+1}^T \frac{\pi(y_j | s_j)}{\mu(y_j | s_j)}
    =
    \!\prod_{j=t+1}^T \rho_j.
\end{equation}
\end{lemma}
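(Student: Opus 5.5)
The plan is to prove the identity by a telescoping argument. Introduce the partial products $\Gamma_{t} := \prod_{j=t}^T \rho_j$ for $t=1,\dots,T$, with the convention $\Gamma_{T+1} := 1$. The key observation is that each summand satisfies
\[
(\rho_t - 1)\Gamma_{t+1} = \rho_t\Gamma_{t+1} - \Gamma_{t+1} = \Gamma_t - \Gamma_{t+1}.
\]
Summing over $t=1,\dots,T$ then telescopes:
\[
\sum_{t=1}^T (\rho_t-1)\Gamma_{t+1} = \Gamma_1 - \Gamma_{T+1} = \prod_{t=1}^T \rho_t - 1 = \frac{\pi(y)}{\mu(y)} - 1,
\]
where the last equality uses the autoregressive factorization $\pi(y)=\prod_{t}\pi(y_t\mid s_t)$, and similarly for $\mu$.

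Multiplying by $R(y)$ and taking the expectation under $y\sim\mu$ gives
\[
\EE_{y\sim\mu}\!\left[R(y)\tfrac{\pi(y)}{\mu(y)}\right] - \EE_{y\sim\mu}[R(y)].
\]
The first term equals $\EE_{y\sim\pi}[R(y)] = \Jcal(\pi)$ by the standard importance-sampling change of measure. The second term is $\Jcal(\mu)$, which completes the identity.

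The only point that needs care is this change of measure. It requires $\pi$ to be absolutely continuous with respect to $\mu$ on the (finite) response space, so that the likelihood ratios $\rho_t$ are well-defined wherever $\mu$ places mass. I would state this support assumption explicitly, noting that it holds automatically for softmax policies with full support. Under this assumption the expectation is a finite sum and no integrability issues arise.

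If responses have variable length $T$, I would treat termination as an EOS token and apply the same argument to each trajectory, letting $T=T(y)$. Since the telescoping identity holds pathwise for every realized $y$, the argument goes through unchanged.
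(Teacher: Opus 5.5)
Your proof is correct and is essentially the paper's argument. The paper telescopes $\pi(y)-\mu(y)$ through the hybrid products $\prod_{k<t}\mu(y_k\mid s_k)\prod_{j\ge t}\pi(y_j\mid s_j)$ and then divides by $\mu(y)$, whereas you divide by $\mu(y)$ first, telescope $\Gamma_t-\Gamma_{t+1}$ pathwise, and finish with a change of measure; this is the same identity read in the other direction. Stating $\pi\ll\mu$ explicitly is a good addition, since the paper's multiply-and-divide step uses it silently. Its role, though, is to make the change of measure exact (no $\pi$-mass is lost off the support of $\mu$), not to make the $\rho_t$ well-defined, which they automatically are along any $y$ with $\mu(y)>0$.
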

The proof is deferred to Appendix~\ref{appsubsec:proof_lemma:pdl}.
Lemma~\ref{lemma:pdl} shows that the policy improvement can be decomposed into token-level
policy deviations \(\rho_t-1\), with each deviation weighted by a future likelihood-ratio correction, which we call the \textit{full forward trace},
$\Gamma_{t+1} = \prod_{j=t+1}^T \frac{\pi(y_j\mid s_j)}{\mu(y_j\mid s_j)}$.
Conditional on the prefix up to
token \(t\), this factor is the likelihood ratio of the remaining tokens \(y_{t+1:T}\) under the candidate
policy \(\pi\) relative to the behavior policy \(\mu\).

\textbf{Trust region for RLVR. }\,
Though the objective in~\eqref{eq:pdl_exact} is exact, directly optimizing it is
unstable in long-horizon language generation. 
The full forward trace $\Gamma_{t+1}(\theta)$
compounds token-level policy mismatch over the future tokens. 
Thus, a few high-ratio trajectories can dominate the update, leading to high-variance and
unstable gradients.
To avoid this instability, we typically optimize a \textit{local, $\Gamma_{t+1}$-free} surrogate objective under a trust-region constraint.
\begin{proposition}[Policy improvement]
\label{prop:policy_improvement_bound}
Suppose \(|R(y)|\le \xi\) for all \(y\) and $\left|\rho_t-1\right|
    \le \epsilon$
for all $t$, for some \(\epsilon>0\).
Define the population local surrogate as 
$
\Lcal_{\mu}(\pi) :=
    \EE_{y\sim\mu}
    \big[
        R(y)\sum_{t=1}^T
        \bigl(\rho_t-1\bigr)
    \big]$.
The empirical surrogate is then defined as 
    $\widehat{\Lcal}_{\mu}(\pi)
    :=
    \frac1G\sum_{i=1}^{G}
    R(y^{(i)})
    \sum_{t=1}^{T}
    \bigl(\rho_t^{(i)}-1\bigr)$.
Then, with probability at least \(1-\alpha\), we have
\begin{align*}
    \Jcal(\pi)-\Jcal(\mu)
    &\ge
    \Lcal_{\mu}(\pi)
    -
    2\xi T(T-1)
    \Bigl(D_{\mathrm{TV}}^{\max}(\mu,\pi)\Bigr)^2 
    \\
    &\ge
    \widehat{\Lcal}_{\mu}(\pi)
    -
    2\xi T(T-1)
    \Bigl(D_{\mathrm{TV}}^{\max}(\mu,\pi)\Bigr)^2
    -
    \xi T\epsilon
    \sqrt{
        \frac{2\log(1/\alpha)}{G}} ,
    \numberthis 
    \label{eq:N=1_lower}
\end{align*}
where
    $D_{\mathrm{TV}}^{\max}(\mu,\pi)
    :=
    \sup_{s}
    D_{\mathrm{TV}}\!\left(
        \mu(\cdot | s),\pi(\cdot | s)
    \right)$.
\end{proposition}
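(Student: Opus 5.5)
The plan is to start from the exact identity of Lemma~\ref{lemma:pdl} and subtract the local surrogate. This gives
\[
\Jcal(\pi)-\Jcal(\mu)-\Lcal_\mu(\pi)=\EE_{y\sim\mu}\Big[R(y)\sum_{t=1}^T(\rho_t-1)(\Gamma_{t+1}-1)\Big].
\]
Bounding the absolute value of the right-hand side by $2\xi T(T-1)(D_{\mathrm{TV}}^{\max})^2$ yields the first inequality. A concentration argument for the empirical surrogate then yields the second.

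\textbf{Step 1: bias term.} I expand $\Gamma_{t+1}-1$ telescopically,
\[
\Gamma_{t+1}-1=\sum_{k=t+1}^{T}(\rho_k-1)\prod_{j=t+1}^{k-1}\rho_j .
\]
The bias then becomes a double sum over pairs $t<k$ of
\[
\EE_\mu\Big[R(y)(\rho_t-1)\Big(\prod_{j=t+1}^{k-1}\rho_j\Big)(\rho_k-1)\Big].
\]
There are $T(T-1)/2$ such pairs, so it suffices to show each term is at most $4\xi\delta^2$, where $\delta:=D_{\mathrm{TV}}^{\max}$.

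For a fixed pair, I bound $|R|\le\xi$ and take absolute values. Conditioning on $s_k$, the expectation of $|\rho_k-1|$ under $\mu(\cdot\mid s_k)$ is exactly $\sum_y|\pi(y|s_k)-\mu(y|s_k)|=2D_{\mathrm{TV}}\le 2\delta$. The tokens after $k$ integrate out to $1$ under $\mu$ once absolute values are taken.

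The product $\prod_{j=t+1}^{k-1}\rho_j$ is nonnegative. Together with the $\mu$-measure over $y_{t+1:k-1}$, it converts that segment into an expectation under $\pi$, and this change of measure is valid since every factor is nonnegative. Finally, conditioning on $s_t$, the term $\EE_\mu|\rho_t-1|$ contributes another $2\delta$. The product is $4\xi\delta^2$ per pair, and summing gives $4\xi\delta^2\cdot T(T-1)/2=2\xi T(T-1)\delta^2$.

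This step is the main obstacle. The difficulty is the order of conditioning: the factor $|\rho_k-1|$ must be integrated first, given $s_k$, and the inner product must be treated as a change of measure so that it does not blow up.

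\textbf{Step 2: concentration.} The empirical quantity $\widehat{\Lcal}_\mu(\pi)$ averages $G$ i.i.d.\ copies of $Z=R(y)\sum_t(\rho_t-1)$, whose mean is $\Lcal_\mu(\pi)$. Since $|\rho_t-1|\le\epsilon$ and $|R|\le\xi$, each copy satisfies $|Z|\le \xi T\epsilon$, so $Z$ ranges over an interval of width $2\xi T\epsilon$.

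One-sided Hoeffding then gives, with probability at least $1-\alpha$,
\[
\Lcal_\mu\ge\widehat{\Lcal}_\mu-2\xi T\epsilon\sqrt{\log(1/\alpha)/(2G)}=\widehat{\Lcal}_\mu-\xi T\epsilon\sqrt{2\log(1/\alpha)/G},
\]
which matches the stated constant. Chaining this with Step 1 completes the argument.

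I treat $\pi$ as fixed, independent of the samples, as the statement implicitly requires.
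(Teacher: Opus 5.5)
Your proposal is correct and follows the paper's proof. It uses the same PDL decomposition, the same per-$t$ bound $4(T-t)\delta^2$ (your pairwise bound summed over $k$), and the same Hoeffding constant. The only difference is that you derive the sequential total-variation estimate $\EE_\mu[|\Gamma_{t+1}-1|\mid s_{t+1}]\le 2(T-t)\delta$ inline, via the telescoping $\Gamma_{t+1}-1=\sum_{k>t}(\rho_k-1)\prod_{j=t+1}^{k-1}\rho_j$ and a change of measure; the paper simply cites this estimate as a standard telescoping/coupling bound.
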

The proof is deferred to Appendix~\ref{app_subsec:policy_improve}.
Proposition~\ref{prop:policy_improvement_bound} gives a lower bound on the one-step policy improvement in terms of the empirical local surrogate, a total-variation penalty, and a variance term.
The bound implies that increasing \(\widehat{\Lcal}_{\mu}(\pi)\) leads to policy improvement only when the target policy remains sufficiently close to the rollout policy \(\mu\), i.e., when \(D_{\mathrm{TV}}\) is small.
This provides a direct justification for trust-region mechanisms in conventional RLVR: they control the error incurred by replacing the exact policy gradient objective in~\eqref{eq:pdl_exact} with the local surrogate.
We therefore optimize:
\begin{align}
    \max_{\pi}\,
    \widehat{\Lcal}_{\mu}(\pi)
    =
    \frac1G\sum_{i=1}^{G}
    R(y^{(i)})
    \sum_{t=1}^{T}
    \bigl(\rho_t^{(i)}-1\bigr),
    \qquad
    \mathrm{s.t.}
    \quad
    D_{\mathrm{TV}}^{\max}(\mu,\pi)\le \delta, 
    \label{eq:trpo}
\end{align}
By Pinsker's inequality, the TV constraint can also be enforced via a KL-divergence constraint.

\textbf{Proximal Policy Optimization (PPO). }\,
Although the constrained objective in~\eqref{eq:trpo} has strong theoretical motivation, solving it requires second-order optimization, which is computationally expensive and difficult to scale.
PPO~\citep{schulman2017proximal} was introduced as a practical alternative that preserves the stabilizing effect of trust-region methods through a clipping operation:
\begin{align}
    \widehat{\Lcal}_{\mu}^{\mathrm{PPO}}(\pi)
    :=
    \frac{1}{G}\sum_{i=1}^{G}
    \sum_{t=1}^{T}
    \min\left(
        \rho_t^{(i)}\widehat A_t^{(i)},\,
        \operatorname{clip}\!\left(
            \rho_t^{(i)},
            1-\epsilon,
            1+\epsilon
        \right)\widehat A_t^{(i)}
    \right),
    \label{eq:ppo}
\end{align}
where \(\widehat{A}_t^{(i)}\) denotes the estimated advantage at token position \(t\) for the \(i\)-th response.
In LLM fine-tuning, the advantage is typically constructed from the centered group reward, e.g.,
\(\widehat A_t^{(i)}=\widehat A^{(i)}:=R(y^{(i)})-\frac{1}{G}\sum_{j=1}^{G}R(y^{(j)})\) for all \(t\)~\citep{shao2024deepseekmath,liu2025understanding}.
The clipping operation can be viewed as a \textit{sample-level approximation} to a
trust-region constraint.
Indeed, for each token state \(s_t\), \(D_{\mathrm{TV}}(\mu(\cdot | s_t),\pi(\cdot | s_t))=\frac{1}{2}\EE_{y_t\sim\mu(\cdot | s_t)}[|\rho_t-1|]\).
Thus, clipping \(\rho_t\) around \(1\) approximately limits the local policy shift from \(\mu\) to \(\pi\), providing a practical first-order substitute for the explicit trust-region constraint.

\textbf{Masked Policy Gradient (MPG). }\,
Recently, a growing body of work has emphasized that a key factor for stable RL training of LLMs is the control of training–inference mismatch~\citep{chen2025minimax,qi2025defeating,team2025every,zheng2025stabilizing,qi2026rethinking}.
In particular, masked policy gradient methods—which exclude a small subset of tokens exhibiting large discrepancies in log-probability between the rollout and training policies—have emerged as a general framework for addressing this issue.
The objective of MPG is
\begin{equation}
    \widehat{\Lcal}^{\rm MPG}_{\mu}(\pi)
    :=
    \frac{1}{G}\sum_{i=1}^{G}
    \sum_{t=1}^{T}
    M_t^{(i)}\cdot \rho_t^{(i)} \cdot \widehat A_t^{(i)},
    \label{eq:MPG}
\end{equation}
where \(M_t^{(i)}\in\{0,1\}\) is a token-level mask.
Several existing methods instantiate this template through different choices of \(M_t^{(i)}\).
For example, GRPO~\citep{shao2024deepseekmath} uses the asymmetric ratio-based mask \(M_t^{(i)}\!:=\!\mathbf{1}\{\epsilon_\mathrm{low}\!<\!\rho_t^{(i)}\!< \! \epsilon_\mathrm{high}\ \text{or}\ \widehat A_t^{(i)}(\rho_t^{(i)}-1)\le 0\}\), while DPPO~\citep{qi2026rethinking} replaces this with a divergence-based token mask, e.g., \(M_t^{(i)}:=\mathbf{1}\{D_{\mathrm{TV}}(\mu(\cdot|s_t^{(i)}),\pi(\cdot|s_t^{(i)}))\le\delta\ \text{or}\ \widehat A_t^{(i)}(\rho_t^{(i)}-1)\le 0\}\).
Thus, both methods induce a token-level trust-region effect by retaining update terms only when the policy shift is controlled or when the update moves in the unclipped direction.

\section{$N$-Step Forward-Trace Policy Optimization}
\label{sec:alg}
Lemma~\ref{lemma:pdl} shows that the full forward trace
\(\Gamma_{t+1}\) gives an exact, \textit{unbiased} expression for policy improvement.
In long-horizon generation, however, this full product can be \textit{unstable} and induce \textit{high variance}, since even modest token-level policy mismatches may compound across the remaining tokens.
At the other extreme, ignoring the forward trace entirely gives the \textit{biased} local surrogate underlying
standard RLVR objectives such as PPO/GRPO in~\eqref{eq:ppo},
where stability is enforced only through ratio clipping or token-level divergence masking.
Motivated by this bias--variance trade-off, we introduce an \textbf{\(\mathbf{N}\)-step forward trace} that retains only a finite window of future likelihood ratios, interpolating between the local RLVR surrogate and the exact policy gradient objective in~\eqref{eq:pdl_exact}.

For \(N\in\{1,\ldots,T\}\), define the $N$-step forward trace
    $\Gamma_{t+1}^{(N)}
    :=
    \prod_{j=t+1}^{\min\{t+N-1,T\}}
    \rho_j$.
This correction keeps only the next \(N-1\) future likelihood ratios after token \(t\). 
Using this truncated correction, we define the (population) $N$-step surrogate objective as
\begin{equation} \label{eq:N step surrogate loss}
    \Lcal_{\mu}^{(N)}(\pi)
    :=
    \EE_{y\sim\mu}
    \left[
        R(y)\sum_{t=1}^{T}
        \bigl(\rho_t-1\bigr)
        \Gamma_{t+1}^{(N)}
    \right].
\end{equation}
When \(N=1\), the product is empty, so \(\Gamma_{t+1}^{(1)}=1\). 
Therefore, the \(N\)-step surrogate reduces to the local surrogate, i.e.,
    $\Lcal_{\mu}^{(1)}(\pi)
    =
    \Lcal_{\mu}(\pi).$
At the other extreme, when \(N=T\), the correction includes all future tokens after token \(t\), so
\(\Gamma_{t+1}^{(T)}=\Gamma_{t+1}\). 
This recovers the exact policy-improvement
objective in~\eqref{eq:pdl_exact}.
While larger $N$ may appear preferable since $N=T$ recovers the exact policy gradient objective, this observation applies only to the \emph{population} objective.
In practice, optimization is performed using an empirical $N$-step surrogate computed from $G$ sampled responses, namely
$\widehat{\Lcal}_{\mu}^{(N)}(\pi)
    :=
    \frac1G\sum_{i=1}^{G}
    R(y^{(i)})
    \sum_{t=1}^{T}
    \bigl(\rho_t^{(i)}-1\bigr)
    \Gamma_{t+1}^{(N,i)}.$
As $N$ increases, the variance of this estimator also grows, so the practical behavior is governed by a trade-off between bias and variance.

\subsection{Theoretical Analysis}
\label{sec:theoretical analysis}

The intuition on the bias--variance trade-off of the $N$-step surrogate is formalized in the following theorem, where the proof is provided in Appendix~\ref{app_sec:proof_main}.

\begin{theorem}[\(N\)-step policy improvement]
\label{thm:n_step_policy_improvement}
Suppose \(|R(y)|\le \xi\) for all \(y\) and $\left|\rho_t-1\right|
    \le \epsilon$
for all $t$, for some \(\epsilon>0\).
Then, for any fixed \(\pi\), with probability at least \(1-\alpha\), we have
\begin{align*}
    \Jcal(\pi)-\Jcal(\mu)
    &\ge
    \Lcal_{\mu}^{(N)}(\pi)
    -
    2\xi (T-N)(T-N+1)
    \Bigl(D_{\mathrm{TV}}^{\max}(\mu,\pi)\Bigr)^2
    \\
    &\ge
    \widehat{\Lcal}_{\mu}^{(N)}(\pi)
    -
    \underbrace{2\xi (T-N)(T-N+1)
    \Bigl(D_{\mathrm{TV}}^{\max}(\mu,\pi)\Bigr)^2}_{\text{truncation bias: decreasing in }N}
    -
    \underbrace{B_{N}
    \sqrt{\frac{2\log(1/\alpha)}{G}}\,,}_{\text{variance: increasing in}\, N}
\end{align*}
where 
\(D_{\mathrm{TV}}^{\max}(\mu,\pi)
:=
\sup_s D_{\mathrm{TV}}(\mu(\cdot| s),\pi(\cdot| s))\)
and
$B_N
    :=
    \xi\epsilon
    \sum_{t=1}^{T}
    (1+\epsilon)^{\min\{N-1,T-t\}} $.
\end{theorem}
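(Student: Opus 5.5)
The plan is to split the claim into a population bias bound and a concentration bound, starting from the exact identity of Lemma~\ref{lemma:pdl}. Subtracting the $N$-step surrogate \eqref{eq:N step surrogate loss} from \eqref{eq:pdl_exact} gives
\[
\Jcal(\pi)-\Jcal(\mu)-\Lcal_{\mu}^{(N)}(\pi)
=\EE_{y\sim\mu}\Big[R(y)\sum_{t=1}^{T}(\rho_t-1)\,\Gamma_{t+1}^{(N)}\bigl(\Gamma_{t+N}-1\bigr)\Big],
\]
where $\Gamma_{t+1}=\Gamma_{t+1}^{(N)}\Gamma_{t+N}$ and $\Gamma_{t+N}:=1$ whenever $t+N>T$. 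Only indices $t\le T-N$ contribute, which already reflects the shift from $T$ to $T-N$ relative to Proposition~\ref{prop:policy_improvement_bound}.

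To bound the bias, I handle each term by a change of measure. The factor $(\rho_t-1)\Gamma_{t+1}^{(N)}$ only reweights the prefix $y_{t:t+N-1}$. Conditioning on $s_{t+N}$ and absorbing $\Gamma_{t+1}^{(N)}$ turns the expectation of $|\rho_t-1|\,\Gamma_{t+1}^{(N)}\,|\EE_\mu[R\Gamma_{t+N}-R\mid s_{t+N}]|$ into an expectation where tokens $t+1,\dots,t+N-1$ follow $\pi$ and token $t$ follows $\mu$, with weight $|\rho_t-1|$. Two ingredients then give the bound.
\begin{itemize}
\item Given $s_{t+N}$, the inner difference $\EE_\pi[R\mid s_{t+N}]-\EE_\mu[R\mid s_{t+N}]$ is at most $2\xi$ times the TV distance between the future trajectory laws. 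By a telescoping (hybrid) argument, that distance is at most $(T-t-N+1)\,D_{\mathrm{TV}}^{\max}$.
\item Averaging $|\rho_t-1|$ over $y_t\sim\mu(\cdot\mid s_t)$ gives exactly $2D_{\mathrm{TV}}(\mu(\cdot|s_t),\pi(\cdot|s_t))\le 2D_{\mathrm{TV}}^{\max}$.
\end{itemize}
This requires the inner conditional expectation to be taken before averaging over $y_t$. That ordering is legitimate because $(\rho_t-1)$ depends only on $s_{t+1}$, and the intermediate weights integrate to $\pi$-transitions.

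Summing the per-term bound $2\xi\,(T-t-N+1)\,(D_{\mathrm{TV}}^{\max})^2$ over $t=1,\dots,T-N$ gives $2\xi\,(D_{\mathrm{TV}}^{\max})^2\,\tfrac{(T-N)(T-N+1)}{2}$. The stated constant $2\xi(T-N)(T-N+1)$ is therefore a safe upper bound. Any slack in the hybrid-argument constants, for example bounding $\tfrac12\sum|p-q|$ by $2\cdot$TV rather than TV, is absorbed by this factor of $2$. This gives the first inequality.

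For the second inequality, I apply Hoeffding's inequality to the i.i.d.\ summands $Z_i=R(y^{(i)})\sum_t(\rho_t^{(i)}-1)\Gamma_{t+1}^{(N,i)}$. Using $|R|\le\xi$, $|\rho_t-1|\le\epsilon$, and $\rho_j\le 1+\epsilon$, together with the fact that $\Gamma_{t+1}^{(N)}$ has $\min\{N-1,T-t\}$ factors, gives $|Z_i|\le B_N$. The one-sided Hoeffding bound for a range of width $2B_N$ is $\widehat{\Lcal}^{(N)}_\mu-\Lcal^{(N)}_\mu\le 2B_N\sqrt{\log(1/\alpha)/(2G)}=B_N\sqrt{2\log(1/\alpha)/G}$ with probability at least $1-\alpha$, which matches the stated form. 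Fixing $\pi$ beforehand is what makes the $Z_i$ i.i.d.

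The main obstacle is the bias step: the change of measure through the $N-1$ intermediate ratios must be handled carefully. Bounding $\Gamma_{t+1}^{(N)}$ crudely by $(1+\epsilon)^{N-1}$ would spoil the clean TV-squared rate. The point is to absorb these ratios exactly into $\pi$-transitions rather than bound them, and to keep the dependence on $y_t$ in the weight so that the TV factor from token $t$ is not lost.
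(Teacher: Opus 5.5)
Your proposal is correct and follows essentially the paper's proof: the same factorization $\Gamma_{t+1}=\Gamma_{t+1}^{(N)}\Gamma_{t+N}$ (the paper's $Q_{m_t+1}$), conditioning on $s_{t+N}$, the telescoping TV bound $(T-t-N+1)\,D_{\mathrm{TV}}^{\max}$, the identity $\EE_\mu[\Gamma_{t+1}^{(N)}\mid s_{t+1}]=1$ to reduce to $\EE_\mu|\rho_t-1|\le 2D_{\mathrm{TV}}^{\max}$, and Hoeffding with range $2B_N$; the only cosmetic difference is that you keep $R$ inside the conditional expectation instead of first bounding $|R|\le\xi$. One bookkeeping correction: your two ingredients multiply to $4\xi(T-t-N+1)(D_{\mathrm{TV}}^{\max})^2$ per term, not $2\xi(\cdots)$, and this sums to exactly the stated $2\xi(T-N)(T-N+1)(D_{\mathrm{TV}}^{\max})^2$, so there is no spare factor of $2$ left to absorb other losses.
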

%

\textbf{Bias--variance trade-off. }\,
At the population level, the \(N\)-step surrogate \(\Lcal_{\mu}^{(N)}(\pi)\) lower bounds the true policy improvement up to a truncation-bias term. 
This bias decreases with \(N\), since a longer forward trace retains more future likelihood-ratio
corrections from the exact policy gradient objective in~\eqref{eq:pdl_exact}.
At the finite-sample level, replacing the population surrogate by its empirical counterpart introduces an additional variance penalty of order
$\BigO\big( (1+\epsilon)^{N-1} \big)$. 
Because larger \(N\) involves products of more likelihood ratios, this variance term can grow with \(N\). 
Thus, small \(N\) gives a stable but more biased objective, whereas large \(N\) gives a less biased but potentially less stable objective. 
Selecting an appropriate intermediate \(N\) is therefore essential for achieving consistent and stable policy improvement in practice.

\textbf{Bridging local surrogates and exact policy gradient objective. }\,
Theorem~\ref{thm:n_step_policy_improvement} makes this interpolation explicit by identifying the two endpoints of the \(N\)-step surrogate.
When \(N=1\), we have \(\Gamma_{t+1}^{(1)}=1\), so the resulting bound reduces to the local surrogate bound in~\eqref{eq:N=1_lower}, which underlies standard local surrogate objectives such as PPO/GRPO in~\eqref{eq:ppo}.
When \(N=T\), the correction includes all future tokens after token \(t\), so \(\Gamma_{t+1}^{(T)}=\Gamma_{t+1}\).
In this case, the truncation bias vanishes and the bound recovers the exact policy gradient objective in~\eqref{eq:pdl_exact}.
Thus, the proposed \(N\)-step formulation provides a continuous bridge between local policy optimization and exact policy improvement.

\textbf{Why choosing the right trace horizon matters. }\,
Theorem~\ref{thm:n_step_policy_improvement} suggests that neither endpoint should be used by default. 
The trace horizon \(N\) is a central design parameter: it determines how much future correction is incorporated into the objective, and therefore directly affects the balance between bias reduction and optimization stability. 
Choosing \(N\) too small may discard useful future correction and leave substantial truncation bias; choosing \(N\) too large may introduce unstable long-range likelihood-ratio products and inflate the finite-sample penalty. 
Thus, the best practical choice is often an intermediate trace horizon rather than either endpoint \(N=1\) or \(N=T\).

We therefore treat \(N\) as an important hyperparameter.
Larger values of \(N\) are preferable when the updated policy remains close to the behavior policy (i.e., small \(D_{\mathrm{TV}}^{\max}\)) and when token-level likelihood ratios are stable (i.e., small \(\epsilon\)).
Smaller values of \(N\) are preferable when the update approaches the trust-region boundary (i.e., large \(D_{\mathrm{TV}}^{\max}\)) or future-token ratios become unstable (i.e., large \(\epsilon\)).
Selecting \(N\) appropriately is therefore essential for obtaining an objective that is both stable and accurate enough to translate into strong empirical performance.

\subsection{Toy Example: Small Token MDP}
\noindent
\begin{minipage}{0.5\linewidth}
To illustrate the bias-variance trade-off of $N$-step surrogate, we consider a token MDP with a small vocabulary $\mathcal{V}=\{a,b,c\}$ and a fixed horizon $T=7$.
A trajectory $(y_1, \dots, y_T)$ is generated by a behavior policy $\mu$ and evaluated under a target policy $\pi$.
The reward is defined as a binary function of the completed sequence, indicating whether a predefined target pattern appears as an ordered subsequence. Further details on the token MDP can be found in Appendix~\ref{app:implementation details}.
The results clearly exhibit the predicted trade-off. As $N$ increases, the population surrogate objective $\Jcal(\mu) + \Lcal^{(N)}_\mu(\pi)$ approaches the true objective $\Jcal(\pi)$, confirming that the truncation bias decreases with $N$.
\end{minipage}
\hfill
\begin{minipage}{0.48\linewidth}
  \centering
  \includegraphics[width=\linewidth]{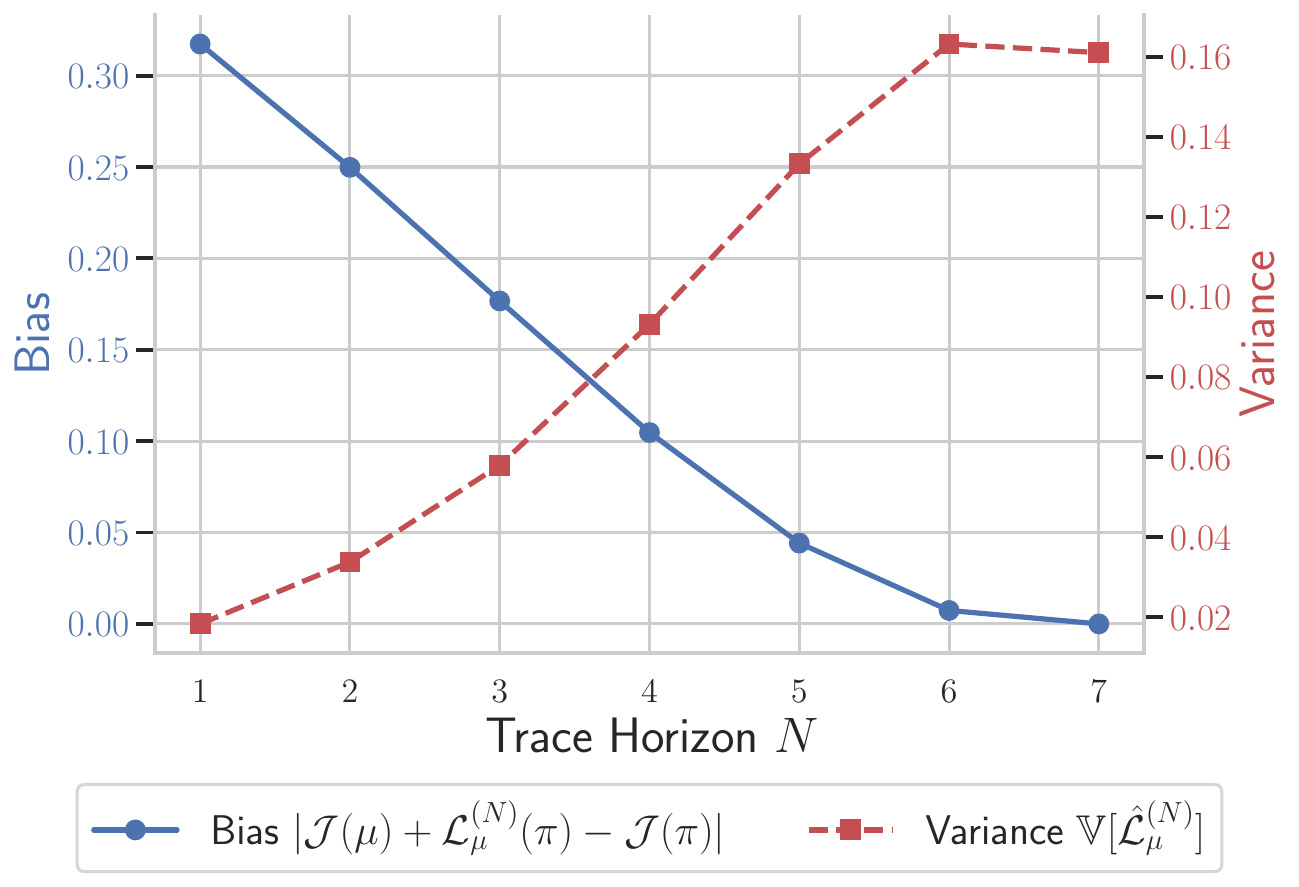}
  \captionof{figure}{\small{Bias--variance trade-off of the \(N\)-step surrogate objective on the token MDP example.}}
  \label{fig:bias_var_trade_off}
\end{minipage}

However, at the same time, the variance $\VV[\hat{\Lcal}^{(N)}_{\mu}(\pi)]$ increases with $N$, reflecting the growing instability introduced by longer products of importance ratios.
Overall, this example provides a clean and controlled validation of the theory: small $N$ yields a stable but biased objective, while large $N$ reduces bias at the cost of increased variance.

\subsection{Practical Implementation}

We now introduce \textbf{\underline{$\mathbf{N}$}-Step \underline{F}orward-Trace \underline{P}olicy \underline{O}ptimization (NFPO)}, a practical RLVR algorithm based on the $N$-step forward trace.
As indicated by Theorem~\ref{thm:n_step_policy_improvement}, a trust region is required for the $N$-step surrogate objective in \eqref{eq:N step surrogate loss} to guarantee policy improvement.
To enable a practical implementation of the trust region, we build upon the masked policy gradient framework and incorporate an $N$-step forward trace.
In practice, we apply stop-gradient to the forward trace for numerical stability, which leads to an objective based on $\rho_t$ rather than $(\rho_t - 1)$:
\begin{align*}
    \widehat \Lcal_{\mu,\mathrm{\AlgName{}}}^{(N)}(\pi)
    :=
    \frac1G
    \sum_{i=1}^{G}
    \sum_{t=1}^{T}
    M_t^{(i)}
    \cdot \widehat A^{(i)}
    \cdot \rho_t^{(i)}
    \cdot \bar\Gamma_{t+1}^{(N,i)},
\end{align*}
where $\bar\Gamma_{t+1}^{(N,i)}$ denotes the clipped $N$-step forward trace defined below.
To satisfy the bounded-ratio condition in Theorem~\ref{thm:n_step_policy_improvement} while ensuring numerical stability, we clip the token-level likelihood ratio as
$\rho_{t,\beta}^{(i)}
    :=
    \operatorname{clip}
    \big(
        \rho_t^{(i)},
        1/\beta,
        \beta
    \big).$
Based on this, we define the clipped $N$-step forward trace as
\[
    \bar\Gamma_{t+1}^{(N,i)}
    :=
    \operatorname{clip}
    \left(
        \prod_{j=t+1}^{m_t(N)}
        \rho_{j,\beta}^{(i)},
        1 - \epsilon_{\mathrm{low}},
        1 + \epsilon_{\mathrm{high}}
    \right).
\]
Finally, since the lower bound in Theorem~\ref{thm:n_step_policy_improvement} is expressed in terms of a total variation constraint, we adopt the token mask based on total variation from DPPO~\citep{qi2026rethinking}, where 
$M_t^{(i)}
    :=
    \mathbf{1}
    \big\{
       D_{\mathrm{TV}}
       \big(
           \mu(\cdot| s_t^{(i)}),
           \pi(\cdot| s_t^{(i)})
       \big)
       \le \delta
       \,\text{or}\,
       \widehat A^{(i)}
       \bigl(
           \rho_t^{(i)}-1
       \bigr)
       \le 0
    \big\}.$


\section{Main Experiments on LLM Reasoning Benchmarks}
\label{sec:main_results}
In this section, we empirically evaluate \AlgName{} on LLM reasoning tasks.

\textbf{Models and baselines. }\,
We employ Qwen3-1.7B-Base~\citep{yang2025qwen3} and Qwen3-8B-Base as base models. We consider two RLVR algorithms to compare with the performance of our algorithm: GRPO, the standard RLVR algorithm that utilizes group-relative advantage estimation and PPO-style clip, and DPPO, which applies divergence-aware token mask to implement trust-region.

\textbf{Training and evaluation. }\,
We use the level 3-5 problems of MATH~\citep{hendrycks2021measuring} dataset with 8k maximum response length for Qwen3-1.7B-Base, and DAPO-Math-17k dataset~\citep{yu2025dapo} with 16k response length for Qwen3-8B-Base.
We sample 128 prompts per step, and generate 8 responses per prompt. After generating the 1024 responses, the policy is updated using mini-batches of size 256.
More details are in Appendix~\ref{app:implementation details}.
The performance is measured by seven mathematical reasoning benchmarks: AIME24/25/26~\citep{aime_2026}, AMC23~\citep{amc_2023}, MATH 500~\citep{hendrycks2021measuring,lightman2023let}, Minerva~\citep{lewkowycz2022solving}, and OlympiadBench (Oly.)~\citep{he2024olympiadbench}.
We report the pass@1 metric averaged over 32 responses per prompt, with sampling temperature $0.7$.

\textbf{Performance evaluation. }\,
Table~\ref{tab:main} presents the evaluation results on comprehensive math reasoning benchmarks.
\AlgName{} consistently outperforms or matches baseline algorithms in almost every benchmark.
In particular, Qwen3-1.7B-Base model trained by \AlgName{} achieves $30.7$\% accuracy on average, which is $16.3$\% improvement compared to GRPO, and $10.0$\% compared to DPPO.
The same trend persists in the scaling experiment with Qwen3-8B-Base model, where \AlgName{} continues to outperform the baselines across most evaluation benchmarks.
These results suggest that the advantage of forward trace correction remains consistent across both model scales and diverse reasoning benchmarks.
Detailed training metrics are presented in Appendix~\ref{app:detailed experimental results}.

\begin{table}[t]
    \centering
    \caption{
    \small{Performance comparison on mathematical reasoning benchmarks. We report pass@1 averaged over 32 responses per prompt.}}
    
    \setlength{\tabcolsep}{4pt}
    \small 
    
    \begin{tabular}{@{}l | c c c c c c c | c@{}}
        \toprule
        \textbf{} & AIME24 & AIME25 & AIME26 & AMC23 & MATH 500 & Minerva & Oly. & Avg. \\
        \midrule
        
        Qwen3-1.7B-Base & $0.034$ & $0.015$ & $0.021$ & $0.261$ & $0.485$ & $0.134$ & $0.184$ & $0.162$ \\
        \hspace{1em} GRPO & $0.094$ & $0.050$ & $0.047$ & $0.425$ & $0.690$ & $0.225$ & $0.315$ & $0.264$ \\
        \hspace{1em} DPPO & $0.106$ & $0.066$ & $0.056$ & $0.424$ & $0.715$ & $0.246$ & $0.341$ & $0.279$ \\
        \hspace{1em} \AlgName{} (ours) & $\mathbf{0.133}$ & $\mathbf{0.074}$ & $\mathbf{0.081}$ & $\mathbf{0.496}$ & $\mathbf{0.745}$ & $\mathbf{0.262}$ & $\mathbf{0.360}$ & $\mathbf{0.307}$ \\

        \addlinespace[0.5ex] 
        \midrule

        Qwen3-8B-Base & $0.094$ & $0.074$ & $0.056$ & $0.423$ & $0.623$ & $0.219$ & $0.292$ & $0.254$ \\
        \hspace{1em} GRPO & $0.274$ & $0.229$ & $0.230$ & $0.741$ & $0.872$ & $0.351$ & $0.527$ & $0.461$  \\
        \hspace{1em} DPPO & $0.347$ & $0.256$ & $0.265$ & $0.791$ & $0.886$ & $\mathbf{0.399}$ & $0.565$ & $0.501$ \\
        \hspace{1em} \AlgName{} (ours) & $\mathbf{0.364}$ & $\mathbf{0.284}$ & $\mathbf{0.286}$ & $\mathbf{0.820}$ & $\mathbf{0.892}$ & $0.390$ & $\mathbf{0.571}$ & $\mathbf{0.515}$ \\
        
        \bottomrule
    \end{tabular}
    \label{tab:main}
\end{table}

\section{Learning-Dynamics Analysis of Forward Trace Correction}
\label{sec:analysis}
We further explore the effect of the forward trace on RL training.
This empirical analysis augments the theoretical findings in Section~\ref{sec:theoretical analysis} by revealing how forward trace changes the training dynamics.

\textbf{Observation 1: \AlgName{} corrects tokens more actively, especially reasoning tokens. }\,
Our token analysis shows that the forward trace enables more active bias correction, and the effect is more pronounced for \textit{reasoning tokens} such as ``suppose'', ``since'', and ``wait''.

\begin{wraptable}{r}{0.48\textwidth}
    \centering
    \vspace{-0.3cm}
    \caption{\small{Correction strength of likelihood ratio and trace-corrected likelihood ratio, measured for each token category.}}
    \begin{tabular}{l c c}
        \toprule
        \multirow{2}{*}{Category} & \multicolumn{2}{c}{Correction strength} \\
        \cmidrule(lr){2-3}
         & $\rho_t$ & $\rho_t \cdot \bar{\Gamma}^{(N,i)}_{t+1}$ \\
        \midrule
        All tokens         & $0.0278$ & \textbf{$0.0722$} \\
        Reasoning tokens   & $0.0531$ & \textbf{$0.1078$} \\
        \bottomrule
    \end{tabular}
    \vspace{-0.4cm}
    \label{table:correction_strength}
\end{wraptable}
The likelihood ratio $\rho_t$ corresponds to the token-level weight of standard PPO surrogate objectives (e.g. GRPO), whereas \AlgName{} augments each weight with the forward trace, yielding \emph{trace-corrected likelihood ratio} $\rho_t \cdot \bar{\Gamma}^{(N,i)}_{t+1}$.
During the RL training, tokens whose token-level weight deviates more from $1$ receive more active correction signals.
To examine how forward trace changes the dynamics of token-level correction, we measure correction strength for the two objectives: $\mathbb{E}_t[ |\rho_t - 1 |]$ represents the correction strength of the PPO surrogate objective, whereas $\mathbb{E}_t[ |\rho_t \cdot \bar{\Gamma}^{(N,i)}_{t+1} - 1 |]$ quantifies the correction strength of \AlgName{}.
We compute correction strength for approximately $3.4\mathrm{M}$ tokens collected during training, and categorize the tokens semantically to compare the correction strength across categories (see Appendix~\ref{app:token_full} for details of the procedure).

As shown in Table~\ref{table:correction_strength}, the correction strength of the trace-corrected likelihood ratio is substantially higher than that of the uncorrected likelihood ratio.
This implies that \AlgName{} more actively applies correction signals to reduce the bias from policy deviation.
In contrast, $\rho_t$ alone provides weak correction signals.
Moreover, the correction strength for reasoning tokens is much higher than the average for both objectives, showing that reasoning tokens are critical for bias correction.
This finding aligns with prior work showing that reasoning performance in RLVR is driven by these reasoning tokens that act as forks in the reasoning trajectory~\citep{wang2025beyond}.
In summary, the forward trace $\bar{\Gamma}^{(N,i)}_{t+1}$ enables more active bias correction, and such correction signals are even intensified for reasoning tokens.
This explains why \AlgName{} is more effective in learning complex reasoning tasks, under policy deviation.
\begin{figure}[t]
    \centering
    \includegraphics[width=0.9\linewidth]{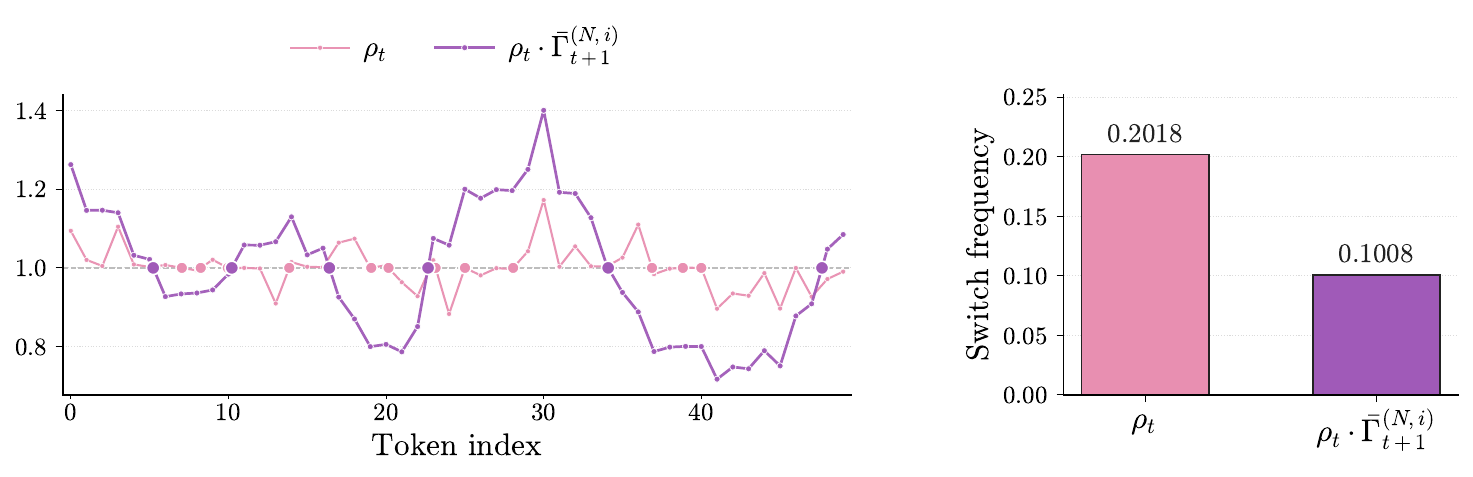}
    \caption{\small{
    (Left) Comparison of token-level ($\rho_t$) and trace-corrected ($\rho_t \cdot \bar{\Gamma}^{(N,i)}_{t+1}$) likelihood ratios along a 50-token rollout segment. (Right) Batch-averaged switch frequency, halved by the forward trace.}
    }
    \label{fig:observation2}
\end{figure}

\textbf{Observation 2: Forward trace acts as low-pass filter. }\,
The key effect of the $N$-step forward trace is to smooth noisy token-wise likelihood-ratio corrections into coherent segment-level update signals.
As shown in the left panel of Figure~\ref{fig:observation2}, the token-level likelihood ratio $\rho_t$ can oscillate rapidly around the neutral value of $1$ across neighboring token positions.
Such high-frequency switching can harm optimization because it breaks local coherence and distorts the policy gradient updates of neighboring tokens, ultimately destabilizing training~\citep{he2026kpo}.
In contrast, \AlgName{} uses the trace-corrected ratio $\rho_t \cdot \bar{\Gamma}^{(N,i)}_{t+1}$, where the forward trace aggregates the likelihood ratios of the subsequent $N-1$ tokens.
This aggregation couples the correction signals of nearby tokens, since adjacent positions share most of their $N$-step forward trace.
As a result, isolated token-level fluctuations are suppressed, while persistent signals over local segments are preserved.

To quantify this smoothing effect, we measure the switch frequency, defined as the fraction of adjacent token pairs whose ratio signals lie on opposite sides of the neutral value \(1\).
A high switch frequency indicates rapid alternation between up-weighting and down-weighting neighboring tokens, while a low switch frequency reflects more coherent segment-level correction.
As shown in the right panel of Figure~\ref{fig:observation2}, the forward trace roughly halves the batch-averaged switch frequency, reducing it from \(20.18\%\) for \(\rho_t\) to \(10.08\%\) for \(\rho_t \cdot \bar{\Gamma}^{(N,i)}_{t+1}\).
This confirms that the forward trace suppresses high-frequency token-level oscillations and acts as a low-pass filter over likelihood-ratio corrections.

\begin{figure}
    \centering
    \includegraphics[width=1.0\linewidth]{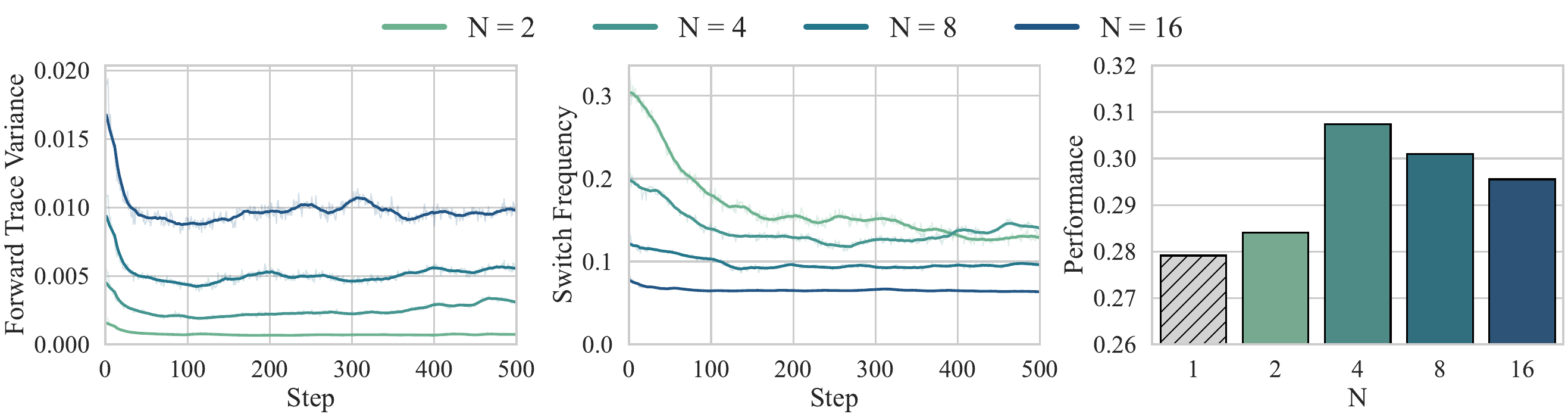}
    \caption{\small{Ablation results across $N$, where the $N=1$ corresponds to DPPO. (Left) Variance of forward trace. (Middle) Switch frequency of forward trace. (Right) Performance on the benchmarks.}}
    \label{fig:ablation n}
\end{figure}
\textbf{Observation 3: An intermediate $N$ achieves the best performance. }\,
We analyze the effect of $N$, which determines the accumulation length of the forward trace.
The results in Figure~\ref{fig:ablation n} reveal a clear pattern.
In terms of performance, $N=4$ emerges as a \textit{sweet spot}, while all configurations with $N>1$ consistently outperform the $N=1$ baseline.

The existence of such a sweet spot becomes evident when examining the training dynamics.
As $N$ grows, the variance of the forward trace increases.
While a moderate level of variance reflects active correction and is therefore beneficial, excessive variance can introduce noise into the gradient estimates and degrade training stability.
At the same time, increasing $N$ enhances semantic low-pass filtering, which is indicated by the reduced switch frequency of $\rho_t \cdot \bar{\Gamma}^{(N)}_{t+1}$.
These two effects evolve in opposing directions as $N$ varies, giving rise to an intermediate regime where the variance in forward trace and the strength of the low-pass filtering are optimally balanced.
This interplay naturally leads to the emergence of a performance sweet spot.

\begin{figure}
    \centering
    \includegraphics[width=1.0\linewidth]{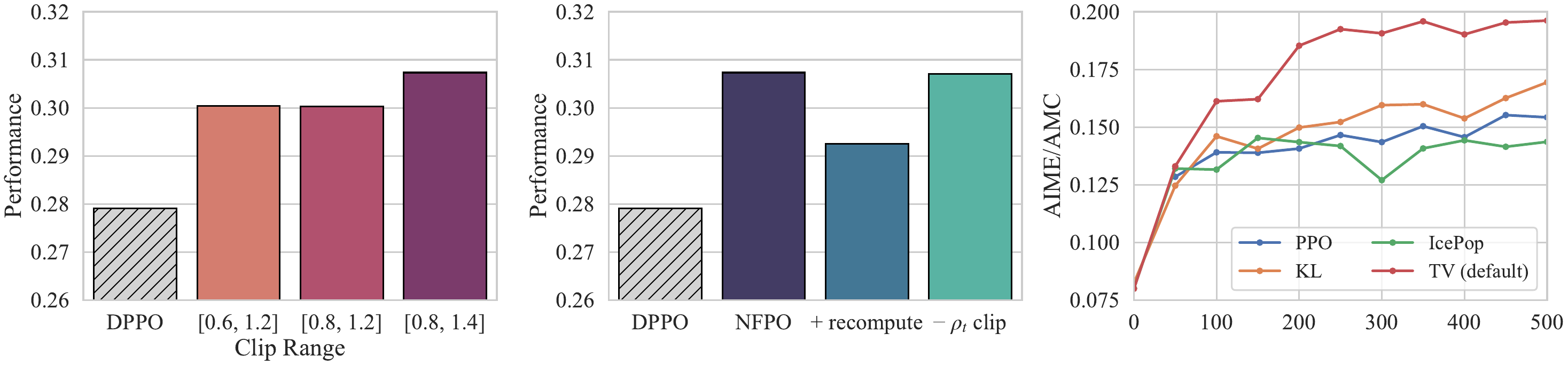}
    \caption{\small{Ablation experiments on NFPO: (Left) Different clip ranges, (Middle) Alternative design choices, and (Right) Different token masks.}}
    \label{fig:ablation misc}
\end{figure}

\section{Ablation Study}
\label{sec:ablation}
We present a comprehensive ablation study on the hyperparameters and design choices of \AlgName{}.

\textbf{Ablation 1. Forward trace clip range.}\,
We investigate the impact of the clip range of forward trace.
Using the clip range of GRPO as a reference, we start with the $[0.8,1.2]$ range and compare with the clip-higher (increasing $\epsilon_\mathrm{high}$ to $1.4$) and clip-lower (decreasing $\epsilon_\mathrm{low}$ to $0.6$).
The left plot in Figure~\ref{fig:ablation misc} shows that all considered clipping ranges outperform the DPPO baseline, demonstrating that the performance improvement of \AlgName{} is robust to the clip range.

\textbf{Ablation 2. Log probability recompute.}\,
Training–inference mismatch introduces a design choice regarding whether to recompute the rollout policy’s log probabilities within the training backend.
Since forward trace computation also relies on the rollout log probabilities, we evaluate the effect of recomputing them during training.
The middle plot in Figure~\ref{fig:ablation misc} shows that recompute leads to a slight performance degradation.
Considering the extra computational overhead required, we implement \AlgName\, without log probability recompute by default.

\textbf{Ablation 3. Likelihood ratio clip.}\,
In our implementation of \AlgName, we clip the likelihood ratio $\rho_{t}$ for numerical stability.
This clipping serves as a minimal safeguard to prevent a small number of outliers from disrupting the training dynamics; accordingly, the clipping threshold is fixed at $\beta=3.0$ across all experiments without tuning.
The middle plot in Figure~\ref{fig:ablation misc} shows that removing this clipping still yields a similar performance.
This behavior is likely due to the clipping applied to the forward trace itself, which prevents variance from exploding, and suggests that the effectiveness of the forward trace is not particularly sensitive to clipping applied to $\rho_t$.

\textbf{Ablation 4. Token mask.}\,
\AlgName\, is built upon the masked policy gradient framework, where the trust region is enforced via token-level masking.
In our main experiments, we adopt the token mask from DPPO, which implements a trust region defined by total variation.
However, \AlgName\, is not restricted to this specific choice.
The right plot in Figure~\ref{fig:ablation misc} presents the performance of \AlgName{} with various token masks.
The total variation-based mask~\citep{qi2026rethinking} outperforms the KL divergence based mask~\citep{qi2026rethinking}, the PPO clipping used in GRPO, and the token mask employed in IcePop~\citep{team2025every}.
Accordingly, we adopt the TV–based token mask as the default choice.

\section{Conclusion}
\label{sec:conclusion}

This work introduces the concept of the $N$-step forward trace, which 
bridges the widely used local surrogate objectives in RLVR and the exact policy gradient objective.
We show that forward trace correction provides a principled mechanism for interpolating between local surrogate losses and exact policy gradients, and through both theoretical analysis and large-scale LLM experiments, we demonstrate that an appropriate choice of the trace horizon $N$ can consistently improve performance over conventional PPO/GRPO-style objectives.
Furthermore, our mechanistic analyses provide intuitive insight into how forward trace correction improves learning dynamics in practice, complementing and reinforcing our theoretical findings.

\bibliographystyle{plainnat}
\bibliography{ref}

\newpage
\appendix
\onecolumn
\counterwithin{table}{section}
\counterwithin{lemma}{section}
\counterwithin{corollary}{section}
\counterwithin{theorem}{section}
\counterwithin{algorithm}{section}
\counterwithin{assumption}{section}
\counterwithin{figure}{section}
\counterwithin{equation}{section}
\counterwithin{condition}{section}
\counterwithin{remark}{section}
\counterwithin{definition}{section}
\counterwithin{proposition}{section}

\section{Related Works}
\label{sec:related}

\paragraph{Reinforcement learning with verifiable rewards (RLVR).}
Reinforcement learning has become a central paradigm for post-training large language models~\citep{ouyang2022training,rafailov2023direct,openo1,guo2025deepseek}.
In particular, for reasoning tasks such as mathematics and coding, reinforcement learning with verifiable rewards (RLVR)—where rewards are assigned based on objective, programmatically verifiable outcomes—has demonstrated notable success.
Building on the theoretical foundations of Trust Region Policy Optimization (TRPO)~\citep{schulman2015trust} and Proximal Policy Optimization (PPO)~\citep{schulman2017proximal}, a range of methods tailored to large-scale language model training have been developed.

Group Relative Policy Optimization (GRPO)~\citep{shao2024deepseekmath,guo2025deepseek} eliminates the need for a learned critic and instead employs group-relative advantage estimation.
This approach has been shown to achieve strong empirical performance while reducing cost, and has since emerged as a standard framework in RLVR training of reasoning language models.
Subsequent works have further improved this framework along multiple directions, including more effective advantage computation~\citep{yu2025dapo,lin2026cppo,wu2025quantile,cheng2026reasoning}, sampling strategies~\citep{yu2025dapo,zhang2025srpo,xiong2025minimalist,zhu2026the}, entropy regularization~\citep{cui2025entropy,petrenko2026entropy}, enhanced training stability~\citep{zheng2025group,chen2025minimax,team2025every,zheng2025stabilizing,qi2025defeating,zhao2025geometric,gao2025soft,qi2026rethinking}, and improved learning objective~\citep{liu2025understanding,chu2026gpg,wang2025aspo,chen2025pass,tajwar2026maximum}.

\paragraph{Surrogate objectives for policy gradient.}
When the behavior policy used for data collection is close to the training policy, the policy gradient objective can be locally approximated, yielding a surrogate objective that enables off-policy optimization without requiring data to be regenerated under the evolving policy.
Conservative Policy Iteration (CPI)~\citep{kakade2002approximately} derives policy improvement guarantees using such a surrogate objective under a policy-mixture update rule, while TRPO~\citep{schulman2015trust} enforces a trust-region constraint to ensure the validity of the local approximation during policy updates.
However, the update rule of CPI is difficult to implement for complex policy classes, and although TRPO is more practical, it still requires second-order optimization.

PPO~\citep{schulman2017proximal} implements the trust-region concept of TRPO via simple clipping operations, making it applicable to complex policy classes.
As a result, most algorithms used in RLVR training of large language models, including GRPO~\citep{shao2024deepseekmath,guo2025deepseek}, are developed as variants of PPO.
To address domain-specific challenges in LLM training, several clipping strategies and token-level masking techniques have been proposed~\citep{yu2025dapo,zheng2025group,gao2025soft,team2025every,zhao2025geometric,qi2026rethinking}.
Nevertheless, the majority of these methods are fundamentally built upon the surrogate objective derived from CPI and TRPO, i.e., a locally approximated policy gradient objective.

In this work, we revisit this local approximation and show that a surrogate objective corrected by an $N$-step forward trace bridges the fully local surrogate and the exact policy gradient.
Our contribution therefore does not lie in refining the implementation of PPO-style objectives, but in modifying the objective itself to achieve improved performance.

\paragraph{Connection with FIPO~\citep{ma2026fipo}.}
Very recently,~\citet{ma2026fipo} proposed Future-KL Influenced Policy Optimization (FIPO), a future-dependent token reweighting method that accumulates \textit{discounted} log-probability shifts over the remaining tokens to modulate token-level policy updates.
This also induces a future-dependent multiplicative weight on token-level updates, making FIPO superficially related to our forward trace formulation.
The key difference is conceptual and theoretical.
FIPO is motivated as a \textit{heuristic} dense-credit-assignment rule based on future-KL signals, 
while our method is derived from the \textit{exact policy-improvement identity}, where the forward trace appears as the principled likelihood-ratio correction term.
Consequently, our objective is not designed as an ad hoc advantage reweighting scheme; rather, it corrects the structural bias of local RLVR objectives and exposes a formal bias--variance trade-off governed by the trace horizon \(N\).

Moreover, when the discount factor in FIPO is set to one, its future-dependent reweighting can be interpreted as the full-horizon instance of our framework, corresponding to the forward trace with \(N=T\).
This highlights the generality of our formulation: it explains existing future-dependent reweighting structures while also providing a principled truncated family with an explicit bias--variance characterization.

For completeness, we also compare our method empirically with FIPO.
The two methods differ not only in how future-dependent signals are incorporated, but also in implementation details such as the trust-region mechanism.
As shown in Figure~\ref{fig:NFPO vs FIPO}, our method consistently outperforms FIPO by a substantial margin.
Rather than attributing this gap to a single isolated factor, we view it as the result of a theoretically grounded design that better realizes the benefits of forward trace correction.
In particular, the masked policy-gradient framework with a total-variation-based token mask provides a reliable way to enforce the trust-region condition in Theorem~\ref{thm:n_step_policy_improvement}.
\begin{figure}[h]
    \centering
    \includegraphics[width=1.0\linewidth]{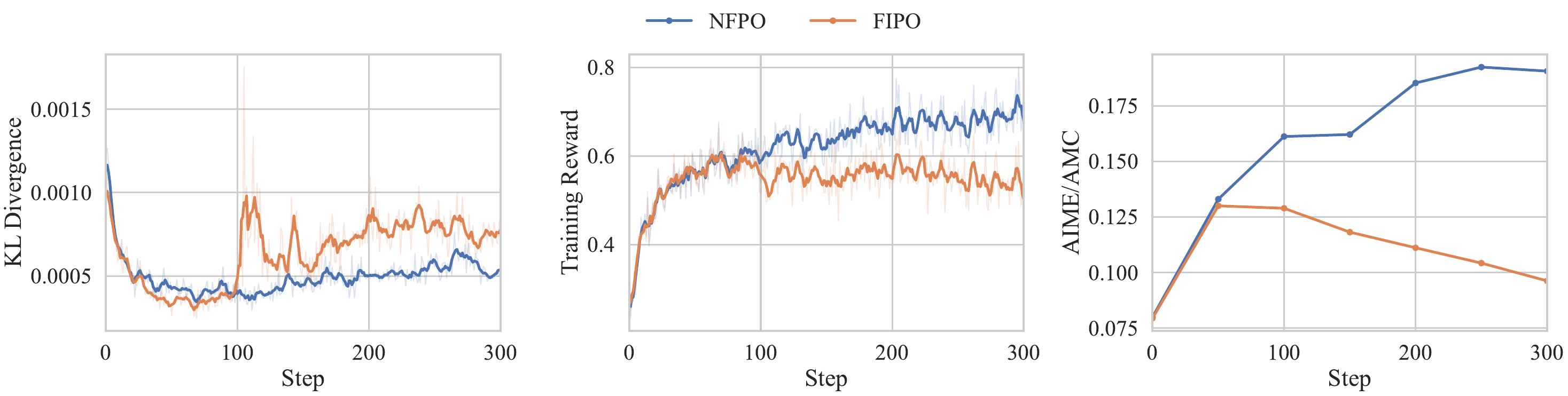}
    \caption{(Left) KL divergence between training and rollout policy, (middle) training reward, and (right) average evaluation performance on AIME24/25/26 and AMC23 for NFPO and FIPO. The base model is Qwen3-1.7B-Base model, and the training setting follows that described in Section~\ref{sec:main_results}.}
    \label{fig:NFPO vs FIPO}
\end{figure}

\section{Detailed Proofs}
\label{app:detailed proofs}

\subsection{Proof of Performance Difference Lemma} \label{appsubsec:proof_lemma:pdl}
\begin{proof} [Proof of Lemma~\ref{lemma:pdl}]
    By definition, the performance difference can be written as
    \begin{align}
        \Jcal(\pi)-\Jcal(\mu)
        =
        \sum_y \bigl(\pi(y)-\mu(y)\bigr)R(y).
        \label{eq:lemma:pdl_gap}
    \end{align}
    Using the autoregressive factorization,
    we have
    $\pi(y)=\prod_{t=1}^T \pi(y_t| s_t)$
    and
    $\mu(y)=\prod_{t=1}^T \mu(y_t| s_t).$
    Then, by the telescoping identity, we get
    \[
    \pi(y)-\mu(y)
    =
    \sum_{t=1}^T
    \left(\prod_{k=1}^{t-1}\mu(y_k| s_k)\right)
    \bigl(\pi(y_t| s_t)-\mu(y_t| s_t)\bigr)
    \left(\prod_{j=t+1}^T \pi(y_j| s_j)\right).
    \]
    Substituting this identity into~\eqref{eq:lemma:pdl_gap} yields
    \[
    \Jcal(\pi)-\Jcal(\mu)
    =
    \sum_y R(y)\sum_{t=1}^T
    \left(\prod_{k=1}^{t-1}\mu(y_k| s_k)\right)
    \bigl(\pi(y_t| s_t)-\mu(y_t| s_t)\bigr)
    \left(\prod_{j=t+1}^T \pi(y_j| s_j)\right).
    \]
    Now multiply and divide by \(\mu(y_t| s_t)\prod_{j=t+1}^T \mu(y_j| s_j)\), and note that
    \[
    \mu(y)
    =
    \left(\prod_{k=1}^{t-1}\mu(y_k| s_k)\right)\mu(y_t| s_t)\prod_{j=t+1}^T \mu(y_j| s_j).
    \]
    Hence,
    \[
    \Jcal(\pi)-\Jcal(\mu)
    =
    \sum_y \mu(y)R(y)\sum_{t=1}^T
    \left(
    \frac{\pi(y_t| s_t)}{\mu(y_t| s_t)}-1
    \right)
    \prod_{j=t+1}^T
    \frac{\pi(y_j| s_j)}{\mu(y_j| s_j)}.
    \]
    Using the definitions \(\rho_t= \frac{\pi(y_t| s_t)}{\mu(y_t| s_t)}\) and
    \(\Gamma_{t+1}=\prod_{j=t+1}^T \rho_j\), we obtain
    \[
    \Jcal(\pi)-\Jcal(\mu)
    =
    \EE_{y\sim\mu}
    \left[
    R(y)\sum_{t=1}^T (\rho_t-1)\Gamma_{t+1}
    \right],
    \]
    which proves the claim.
\end{proof}

\subsection{Proof of Policy Improvement}
\label{app_subsec:policy_improve}
\begin{proof} [Proof of Proposition~\ref{prop:policy_improvement_bound}]
    The first inequality follows directly from Theorem~3.2 of~\citep{qi2026rethinking}; we include the proof here for completeness.
    Fix a policy \(\pi\), and write
        $\delta
        :=
        D_{\mathrm{TV}}^{\max}(\mu,\pi)
        =
        \sup_s
        D_{\mathrm{TV}}\!\left(
            \mu(\cdot | s),\pi(\cdot | s)
        \right)$.
    By the performance difference lemma (Lemma~\ref{lemma:pdl}), we have
    \[
        \Jcal(\pi)-\Jcal(\mu)
        =
        \EE_{y\sim\mu}
        \left[
            R(y)\sum_{t=1}^{T}
            \bigl(\rho_t-1\bigr)\Gamma_{t+1}
        \right],
    \]
    where \(\Gamma_{t+1}:=\prod_{j=t+1}^{T}\rho_j\). 
    Therefore, we get
    \begin{align*}
        \Jcal(\pi)-\Jcal(\mu)
        &=
        \EE_{y\sim\mu}
        \left[
            R(y)\sum_{t=1}^{T}
            \bigl(\rho_t-1\bigr)
        \right]
        +
        \EE_{y\sim\mu}
        \left[
            R(y)\sum_{t=1}^{T}
            \bigl(\rho_t-1\bigr)
            \bigl(\Gamma_{t+1}-1\bigr)
        \right]
        \\
        &=
        \Lcal_{\mu}(\pi)
        +
        \EE_{y\sim\mu}
        \left[
            R(y)\sum_{t=1}^{T}
            \bigl(\rho_t-1\bigr)
            \bigl(\Gamma_{t+1}-1\bigr)
        \right].
        \numberthis
        \label{eq:pi_decomposition}
    \end{align*}
    Using \(|R(y)|\le \xi\), it remains to bound the second term from below:
    \begin{equation}
        \EE_{y\sim\mu}
        \left[
            R(y)\sum_{t=1}^{T}
            \bigl(\rho_t-1\bigr)
            \bigl(\Gamma_{t+1}-1\bigr)
        \right]
        \ge
        -\xi
        \sum_{t=1}^{T}
        \EE_{y\sim\mu}
        \left[
            |\rho_t-1|\,
            |\Gamma_{t+1}-1|
        \right].
        \label{eq:pi_decomposition_second}
    \end{equation}

    We first record two elementary bounds. 
    For any state \(s\),
    \begin{equation}
    \label{eq:pi_first}
        \EE_{a\sim\mu(\cdot | s)}
        \left[
            \left|
            \frac{\pi(a| s)}{\mu(a| s)}-1
            \right|
        \right]
        =
        \sum_a
        \left|
            \pi(a| s)-\mu(a| s)
        \right|
        =
        2D_{\mathrm{TV}}\!\left(
            \mu(\cdot | s),\pi(\cdot | s)
        \right)
        \le 2\delta.
    \end{equation}
    Next, conditional on \(s_{t+1}=(x,y_{\le t})\), the forward trace
\(\Gamma_{t+1}(\theta)\) is the likelihood ratio of the future-token sequence
\(y_{t+1:T}\) under \(\pi\) relative to \(\mu\).
    Hence,
    \[
        \EE_{\mu}
        \left[
            |\Gamma_{t+1}-1|
            \mid s_{t+1}
        \right]
        =
        2D_{\mathrm{TV}}\!\left(
            P_{\pi}^{t+1:T}(\cdot | s_{t+1}),
            P_{\mu}^{t+1:T}(\cdot | s_{t+1})
        \right).
    \]
    By a standard telescoping/coupling bound for sequential distributions, we get
    \[
        D_{\mathrm{TV}}\!\left(
            P_{\pi}^{t+1:T}(\cdot | s_{t+1}),
            P_{\mu}^{t+1:T}(\cdot | s_{t+1})
        \right)
        \le
        \sum_{j=t+1}^{T}
        D_{\mathrm{TV}}^{\max}(\mu,\pi)
        =
        (T-t)\delta.
    \]
    Therefore, we obtain
    \begin{equation}
    \label{eq:pi_second}
        \EE_{\mu}
        \left[
            |\Gamma_{t+1}-1|
            \mid s_{t+1}
        \right]
        \le
        2(T-t)\delta.
    \end{equation}

    Combining the two bounds, for each \(t\le T-1\), we have
    \begin{align*}
        \EE_{\mu}
        \left[
            |\rho_t-1|\,
            |\Gamma_{t+1}-1|
        \right]
        &=
        \EE_{\mu}
        \left[
            |\rho_t-1|\,
            \EE_{\mu}
            \left[
                |\Gamma_{t+1}-1|
                \mid s_{t+1}
            \right]
        \right]
        \\
        &\le
        2(T-t)\delta\,
        \EE_{\mu}
        \left[
            |\rho_t-1|
        \right]
        \tag{Eqn.~\eqref{eq:pi_second}}
        \\
        &\le
        4(T-t)\delta^2.
        \tag{Eqn.~\eqref{eq:pi_first}}
    \end{align*}
    For \(t=T\), let \(\Gamma_{T+1}=1\), so the corresponding term is zero.
    Thus,
    \begin{align*}
        \sum_{t=1}^{T}
        \EE_{\mu}
        \left[
            |\rho_t-1|\,
            |\Gamma_{t+1}-1|
        \right]
        &\le
        4\delta^2
        \sum_{t=1}^{T-1}(T-t)
        \\
        &=
        4\delta^2\cdot \frac{T(T-1)}{2}
        \\
        &=
        2T(T-1)\delta^2.
    \end{align*}
    Combining this bound with~\eqref{eq:pi_decomposition} and~\eqref{eq:pi_decomposition_second} gives
    \begin{equation}
        \Jcal(\pi)-\Jcal(\mu)
        \ge
        \Lcal_{\mu}(\pi)
        -
        2\xi T(T-1)
        \Bigl(D_{\mathrm{TV}}^{\max}(\mu,\pi)\Bigr)^2,
        \label{eq:pi_lower_first}
    \end{equation}
    which proves the first lower bound.
    
    It remains to relate the population surrogate to its empirical counterpart. Define the
    single-sample surrogate estimator
    \[
        Z_\pi(y)
        :=
        R(y)\sum_{t=1}^{T}
        \bigl(\rho_t-1\bigr).
    \]
    Then \(\Lcal_{\mu}(\pi_\theta)=\EE_{\mu}[Z_\pi(y)]\) and
        $\widehat{\Lcal}_{\mu}(\pi_\theta)
        =
        \frac1G\sum_{i=1}^{G}Z_\pi(y^{(i)}).$
    By the bounded-reward assumption and the likelihood-ratio deviation bound, we get
    \begin{align*}
        |Z_\pi(y)|
        &=
        \left|
            R(y)
            \sum_{t=1}^{T}
            \bigl(\rho_t-1\bigr)
        \right|  
        \le
        |R(y)|
        \sum_{t=1}^{T}
        \left|\rho_t-1\right| 
        \le
        \xi T\epsilon .
    \end{align*}
     Since
    \(\{y^{(i)}\}_{i=1}^{G}\) are i.i.d. samples from \(\mu(\cdot| x)\), the random variables
    \(\{Z_\pi(y^{(i)})\}_{i=1}^{G}\) are also i.i.d. and bounded. By Hoeffding's inequality,
    with probability at least \(1-\alpha\), we have
    \[
    \EE_{\mu}[Z_\pi(y)]
    \ge
    \frac1G\sum_{i=1}^{G} Z_\pi(y^{(i)})
    -
    \xi T\epsilon
    \sqrt{
        \frac{2\log(1/\alpha)}{G}
    } .
    \]
    Equivalently,
    \[
        \Lcal_{\mu}(\pi)
        \ge
        \widehat{\Lcal}_{\mu}(\pi)
        -
        \xi T\epsilon
        \sqrt{
            \frac{2\log(1/\alpha)}{G}
        } .
    \]
    Combining this concentration bound with the deterministic lower bound in~\eqref{eq:pi_lower_first}, we obtain
    \[
    \begin{aligned}
        \Jcal(\pi)-\Jcal(\mu)
        &\ge
        \Lcal_{\mu}(\pi)
        -
        2\xi T(T-1)
        \Bigl(D_{\mathrm{TV}}^{\max}(\mu,\pi)\Bigr)^2 \\
        &\ge
        \widehat{\Lcal}_{\mu}(\pi)
        -
        2\xi T(T-1)
        \Bigl(D_{\mathrm{TV}}^{\max}(\mu,\pi)\Bigr)^2
        -
        \xi T\epsilon
        \sqrt{
            \frac{2\log(1/\alpha)}{G}
        } .
    \end{aligned}
    \]
    This completes the proof of Proposition~\ref{prop:policy_improvement_bound}.
\end{proof}

\subsection{Proof of Theorem~\ref{thm:n_step_policy_improvement}}
\label{app_sec:proof_main}
\begin{proof}[Proof of Theorem~\ref{thm:n_step_policy_improvement}]
    Fix a target policy \(\pi\) and a forward trace horizon
    \(N\in\{1,\ldots,T\}\).
    We prove the result in three steps.
    
    \paragraph{Step 1: Population truncation error.}
    Recall that, by Lemma~\ref{lemma:pdl}, we have
    \[
        \Jcal(\pi)-\Jcal(\mu)
        =
        \EE_{y\sim\mu}
        \left[
            R(y)
            \sum_{t=1}^{T}
            \bigl(\rho_t-1\bigr)
            \Gamma_{t+1}
        \right],
    \]
    where
        $\Gamma_{t+1}
        :=
        \prod_{j=t+1}^{T}
        \rho_j.$
    
    For each \(t\), define
        $m_t := \min\{t+N-1,T\}$.
    Then the full forward trace decomposes as
    \[
        \Gamma_{t+1}
        =
        \Gamma_{t+1}^{(N)}
        Q_{m_t+1},
    \]
    where
        $Q_{m_t+1}
        :=
        \prod_{j=m_t+1}^{T}
        \rho_j.$
    When \(m_t=T\), the residual product
    \(Q_{m_t+1}\) is an empty product and equals one.
    
    Therefore, we have
    \begin{align*}
        \Jcal(\pi)-\Jcal(\mu)
        -
        \Lcal_{\mu}^{(N)}(\pi)
        &= 
        \EE_{y\sim\mu}
        \left[
            R(y)
            \sum_{t=1}^{T}
            \bigl(\rho_t-1\bigr)
            \left(
                \Gamma_{t+1}
                -  \Gamma_{t+1}^{(N)}
            \right)
        \right]
        \\
        &=
        \EE_{y\sim\mu}
        \left[
            R(y)
            \sum_{t=1}^{T}
            \bigl(\rho_t-1\bigr)
            \Gamma_{t+1}^{(N)}
            \left(
                Q_{m_t+1}-1
            \right)
        \right]
        \\
        &\le
        \xi
        \sum_{t=1}^{T}
        \EE_{\mu}
        \left[
            \left|\rho_t-1\right|
            \Gamma_{t+1}^{(N)}
            \left|
                Q_{m_t+1}-1
            \right|
        \right]
        ,
        \numberthis 
        \label{eq:thm_gap}
    \end{align*}
    where the last inequality follows from the assumption \(|R(y)|\le \xi\).
    
    We now bound each summand. 
    First, for any state \(s\), we get
    \begin{align*}
        \EE_{a\sim\mu(\cdot| s)}
        \left[
            \left|
            \frac{\pi(a| s)}
                 {\mu(a| s)}
            -1
            \right|
        \right]
        &=
        \sum_a
        \mu(a| s)
        \left|
            \frac{\pi(a| s)}
                 {\mu(a| s)}
            -1
        \right|
        =
        2D_{\mathrm{TV}}
        \left(
            \mu(\cdot| s),
            \pi(\cdot| s)
        \right)
        \\
        &\le
        2D_{\mathrm{TV}}^{\max}(\mu,\pi).
    \end{align*}
    For brevity, we write
        $\delta := D_{\mathrm{TV}}^{\max}(\mu,\pi).$
    Thus, we have
    \begin{equation}
        \EE_{y_t\sim\mu(\cdot| s_t)}
        \left[
            |\rho_t-1|
            \mid s_t
        \right]
        \le 2\delta.
        \label{eq:thm_ratio_bound}
    \end{equation}

    Second, the residual future-token likelihood ratio
        $Q_{m_t+1}
        =
        \prod_{j=m_t+1}^{T}
        \rho_j$
    is the likelihood ratio of the remaining tokens \(y_{m_t+1:T}\) under \(\pi\) relative to
    \(\mu\), conditional on the prefix up to token \(m_t\).
    Let
    \(P_{\mu}^{T-m_t}(\cdot | s_{m_t+1})\) and
    \(P_{\pi}^{T-m_t}(\cdot | s_{m_t+1})\) denote the conditional distributions of the
    future token sequence \(y_{m_t+1:T}\) under \(\mu\) and \(\pi\), respectively. Then,
    conditional on \(s_{m_t+1}\), we have
    \begin{align*}
        \EE_{\mu}
        \left[
            \left|
                Q_{m_t+1}-1
            \right|
            \,\middle|\,
            s_{m_t+1}
        \right]
        &=
        \EE_{\mu}
        \left[
            \left|
            \frac{
                \prod_{j=m_t+1}^{T}
                \pi(y_j| s_j)
            }{
                \prod_{j=m_t+1}^{T}
                \mu(y_j| s_j)
            }
            -1
            \right|
            \,\middle|\,
            s_{m_t+1}
        \right] \\
        &=
        \sum_{y_{m_t+1:T}}
        \left(
            \prod_{j=m_t+1}^{T}
            \mu(y_j| s_j)
        \right)
        \left|
            \frac{
                \prod_{j=m_t+1}^{T}
                \pi(y_j| s_j)
            }{
                \prod_{j=m_t+1}^{T}
                \mu(y_j| s_j)
            }
            -1
        \right| \\
        &=
        \sum_{y_{m_t+1:T}}
        \left|
            \prod_{j=m_t+1}^{T}
            \pi(y_j| s_j)
            -
            \prod_{j=m_t+1}^{T}
            \mu(y_j| s_j)
        \right| \\
        &=
        2D_{\mathrm{TV}}
        \left(
            P_{\mu}^{T-m_t}(\cdot | s_{m_t+1}),
            P_{\pi}^{T-m_t}(\cdot | s_{m_t+1})
        \right),
    \end{align*}
    where last equality follows from
        $D_{\mathrm{TV}}(P,Q)
        :=
        \frac12\sum_y |P(y)-Q(y)|.$
        
    By the standard telescoping bound for total variation over
    sequential kernels, we have
    \[
        D_{\mathrm{TV}}
        \left(
            P_{\mu}^{T-m_t}(\cdot | s_{m_t+1}),
            P_{\pi}^{T-m_t}(\cdot | s_{m_t+1})
        \right)
        \le
        (T-m_t)\delta.
    \]
    Consequently, we obtain
    \begin{equation}
        \EE_{\mu}
        \left[
            \left|
                Q_{m_t+1}-1
            \right|
            \,\middle|\,
            s_{m_t+1}
        \right]
        \le
        2(T-m_t)\delta.
        \label{eq:thm_residual_bound}
    \end{equation}

    Plugging~\eqref{eq:thm_residual_bound} into~\eqref{eq:thm_gap}, we have
    \begin{align*}
        \EE_{\mu}
        \left[
            \left|\rho_t(\theta)-1\right|
            \Gamma_{t+1}^{(N)}
            \left|
                Q_{m_t+1}-1
            \right|
        \right]
        &=
        \EE_{\mu}
        \left[
            \left|\rho_t(\theta)-1\right|
            \Gamma_{t+1}^{(N)}
            \EE_{\mu}
            \left[
                \left|
                    Q_{m_t+1}-1
                \right|
                \,\middle|\,
                s_{m_t+1}
            \right]
        \right]
        \\
        &\le
        2(T-m_t)\delta
        \,
        \EE_{\mu}
        \left[
            \left|\rho_t-1\right|
            \Gamma_{t+1}^{(N)}
        \right].
        \numberthis
        \label{eq:thm_gap_bound2}
    \end{align*}
    Now condition on \(s_t\). 
    Since
    \(\Gamma_{t+1}^{(N)}\) is the likelihood-ratio product over the future
    window \(t+1,\ldots,m_t\), its conditional expectation under \(\mu\), given
    \(s_{t+1}\), is one, i.e.,
    \[
        \EE_{\mu}
        \left[
            \Gamma_{t+1}^{(N)}
            \,\middle|\,
            s_{t+1}
        \right]
        =
        1.
    \]
    Therefore, we have
    \begin{align*}
        \EE_{\mu}
        \left[
            \left|\rho_t-1\right|
            \Gamma_{t+1}^{(N)}
            \,\middle|\,
            s_t
        \right]
        &=
        \EE_{y_t\sim\mu(\cdot | s_t)}
        \left[
            \left|\rho_t-1\right|
            \EE_{\mu}
            \left[
                \Gamma_{t+1}^{(N)}
                \,\middle|\,
                s_{t+1}
            \right]
            \,\middle|\,
            s_t
        \right]
        \\
        &=
        \EE_{y_t\sim\mu(\cdot | s_t)}
        \left[
            \left|\rho_t-1\right|
            \,\middle|\,
            s_t
        \right]
        \\
        &\le
        2\delta.
        \tag{Eqn.~\eqref{eq:thm_ratio_bound}}
    \end{align*}
    Combining this bound with~\eqref{eq:thm_gap_bound2}, we get
    \[
        \EE_{\mu}
        \left[
            \left|\rho_t-1\right|
            \Gamma_{t+1}^{(N)}
            \left|
                Q_{m_t+1}-1
            \right|
        \right]
        \le
        4(T-m_t)\delta^2.
    \]
    
    Summing over \(t\), we obtain
    \begin{align*}
        \left|
        \Jcal(\pi)-\Jcal(\mu)
        -
        \Lcal_{\mu}^{(N)}(\pi)
        \right|
        &\le
        4\xi\delta^2
        \sum_{t=1}^{T}
        (T-m_t)
        =
        4\xi\delta^2
        \sum_{t=1}^{T}
        (T-t-N+1)_+
        \\
        &=  4\xi\delta^2 \sum_{t=1}^{T-N}
        (T-N+1-t)
        \\
        &=  2\xi (T-N)(T-N+1)\delta^2
        .
    \end{align*}
    This implies that
    \[
        \Jcal(\pi)-\Jcal(\mu)
        \ge
        \Lcal_{\mu}^{(N)}(\pi)
        -
        2\xi (T-N)(T-N+1)
        \Bigl(D_{\mathrm{TV}}^{\max}(\mu,\pi)\Bigr)^2.
    \]
    The second term on the right-hand side is the bias upper bound, whose magnitude is strictly decreasing in \(N\). 
    This proves the first inequality.

    \paragraph{Step 2: Empirical concentration.}
    
    Define the single-sample \(N\)-step surrogate estimator
    \[
        Z_N(y)
        :=
        R(y)
        \sum_{t=1}^{T}
        \bigl(\rho_t-1\bigr)
        \Gamma_{t+1}^{(N)}.
    \]
    Then the population \(N\)-step surrogate and its empirical estimate can be written as
    \[
        \Lcal_{\mu}^{(N)}(\pi)
        =
        \EE_{y\sim\mu}[Z_N(y)],
        \qquad
        \text{and}
        \qquad
        \widehat{\Lcal}_{\mu}^{(N)}(\pi)
        =
        \frac1G\sum_{i=1}^{G}Z_N(y^{(i)}).
    \]
    
    Let
        $k_t(N):=\min\{N-1,T-t\}.$
    The product \(\Gamma_{t+1}^{(N)}\) contains exactly \(k_t(N)\) future
    likelihood-ratio factors. 
    Since
        $\left|\rho_t-1\right|\le \epsilon,$
    we have
        $0\le \rho_t\le 1+\epsilon.$
    Therefore, we get
    \[
        0
        \le
        \Gamma_{t+1}^{(N)}
        \le
        (1+\epsilon)^{k_t(N)}
    \]
    Using \(|R(y)|\le \xi\), we obtain
    \begin{align*}
        |Z_N(y)|
        &=
        \left|
            R(y)
            \sum_{t=1}^{T}
            \bigl(\rho_t-1\bigr)
            \Gamma_{t+1}^{(N)}
        \right|
        \\
        &\le
        |R(y)|
        \sum_{t=1}^{T}
        \left|\rho_t-1\right|
        \Gamma_{t+1}^{(N)}
        \\
        &\le
        \xi\epsilon
        \sum_{t=1}^{T}
        (1+\epsilon)^{\min\{N-1,T-t\}}
        =: B_N.
    \end{align*}
    Thus,
        $Z_N(y)\in[-B_N,B_N].$
    
    Since \(y^{(1)},\ldots,y^{(G)}\) are i.i.d. samples from \(\mu\), the random
    variables \(Z_N(y^{(1)}),\ldots,Z_N(y^{(G)})\) are also i.i.d. and bounded.
    By Hoeffding's inequality, for any \(a>0\), we get
    \[
        \PP
        \left(
            \widehat{\Lcal}_{\mu}^{(N)}(\pi)
            -
            \Lcal_{\mu}^{(N)}(\pi)
            \ge a
        \right)
        \le
        \exp\left(
            -\frac{G a^2}{2B_N^2}
        \right).
    \]
    Taking
        $a
        =
        B_N
        \sqrt{
            \frac{2\log(1/\alpha)}{G}
        },$
    we obtain that, with probability at least \(1-\alpha\),
    \[
        \Lcal_{\mu}^{(N)}(\pi)
        \ge
        \widehat{\Lcal}_{\mu}^{(N)}(\pi)
        -
        B_N
        \sqrt{
            \frac{2\log(1/\alpha)}{G}
        }.
    \]
    
    Combining this with the population truncation bound from Step 1 gives
    \begin{align*}
        \Jcal(\pi)-\Jcal(\mu)
        &\ge
        \widehat{\Lcal}_{\mu}^{(N)}(\pi)
        -
        2\xi (T-N)(T-N+1)
        \Bigl(D_{\mathrm{TV}}^{\max}(\mu,\pi)\Bigr)^2
        -
        B_N
        \sqrt{
            \frac{2\log(1/\alpha)}{G}
        }.
    \end{align*}
    This proves the high-probability empirical policy-improvement bound.
    
    \paragraph{Step 3: Monotonicity of the concentration penalty.}
    
    It remains to show that \(B_N\) is strictly increasing in \(N\). 
    Define
        $S_N
        :=
        \sum_{t=1}^{T}
        (1+\epsilon)^{\min\{N-1,T-t\}},$
    so that
        $B_N=\xi\epsilon S_N.$
    Then, for \(N=1,\ldots,T-1\), we have
    \begin{align*}
        S_{N+1}-S_N
        &=
        \sum_{t=1}^{T}
        \left(
            (1+\epsilon)^{\min\{N,T-t\}}
            -
            (1+\epsilon)^{\min\{N-1,T-t\}}
        \right).
    \end{align*}
    If \(t>T-N\), then both minima are equal to \(T-t\), so the corresponding
    summand is zero. 
    If \(t\le T-N\), then
        $\min\{N,T-t\}=N$
        and
        $
        \min\{N-1,T-t\}=N-1.$
    Therefore, we obtain
    \begin{align*}
        S_{N+1}-S_N
        &=
        \sum_{t=1}^{T-N}
        \left(
            (1+\epsilon)^{N}
            -
            (1+\epsilon)^{N-1}
        \right)
        \\
        &=
        (T-N)
        \left(
            (1+\epsilon)^{N}
            -
            (1+\epsilon)^{N-1}
        \right)
        \\
        &=
        (T-N)
        (1+\epsilon)^{N-1}
        \epsilon
        > 0
        .
    \end{align*}
    Thus \(S_N\), and therefore \(B_N=\xi\epsilon S_N\), is strictly increasing in \(N\).
    Consequently, the Hoeffding concentration penalty
      $  B_N
        \sqrt{
            \frac{2\log(1/\alpha)}{G}
        }$
    is also strictly increasing in \(N\). 
    This completes the proof of Theorem~\ref{thm:n_step_policy_improvement}.
\end{proof}

\section{Implementation Details}
\label{app:implementation details}

\paragraph{Framework.}
We implemented \AlgName\, using the verl~\citep{sheng2025hybridflow} framework, which is a widely used training pipeline for LLM reinforcement learning.
The experiments with baseline algorithms follow the official implementation included in the verl framework, and we additionally implemented the loss function for \AlgName.
Since other components of the training pipeline, including data generation, reward computation, and advantage estimation, remain unchanged, our experiments are easily reproducible by replacing the loss function.
All models are trained on 8 H200 GPUs. Each training run requires approximately one day for the 1.7B model and two days for the 8B model.

\paragraph{Hyperparameters.}
Unless otherwise specified, all hyperparameters used in the experiments follow the default settings of verl.
Hyperparameters that are common across all algorithms are summarized in Table~\ref{tab:common hyperparams}.
For GRPO, we adopt the standard clip-higher configuration ($\epsilon_\mathrm{low}=0.2, \epsilon_\mathrm{high}=0.28$).
For DPPO and \AlgName, we use a total variation–based token mask with a threshold of $\delta=0.2$, as suggested in \citep{qi2026rethinking}.
No auxiliary KL or entropy regularization is applied in any of the algorithms.

For NFPO, we use forward trace clip range $[0.8, 1.4]$ and likelihood ratio clip threshold $\beta=3.0$.
The trace horizon is set to $N=4$ for the 1.7B experiments, and $N=8$ for the 8B experiments.

For the ablation study on token mask (the right plot of Figure~\ref{fig:ablation misc}), we use the standard hyperparameters for each token mask: PPO clip with the clip-higher configuration ($\epsilon_\mathrm{low}=0.2, \epsilon_\mathrm{high}=0.28$), IcePop's token mask with ratio threshold $\beta=3.0$, KL divergence mask with threshold $\delta=0.05$.

\begin{table*}[h]
    \centering
    \small
    \caption{Common hyperparameters shared across all algorithms.}
    \begin{tabular}{llc}
    \toprule
    \textbf{Category} & \textbf{Hyperparameter} & \textbf{Value} \\
    \midrule
    
    \multirow{3}{*}{Optimization}
    & Learning Rate &   $1\times10^{-6}$ \\
    & Optimizer & AdamW ($\beta_1=0.9, \beta_2=0.999$)  \\
    & Weight Decay & $0.01$ \\
    
    \midrule
    \multirow{6}{*}{Data}
    & Prompt per Batch & 128 \\
    & Rollouts per Prompt ($G$) & 8 \\
    & Mini-batch Size & 256 (across all DP ranks) \\
    & Rollout Temperature & $1.0$ for training, $0.7$ for evaluation \\
    & Max Response Tokens & 8,000 (MATH), 16,000 (DAPO-Math-17k) \\
    & Training Steps & 500 (MATH), 300 (DAPO-Math-17k) \\
    
    \midrule
    \multirow{3}{*}{Algorithm}
    & KL Loss & No \\
    & Entropy Loss & No \\
    & Log probability recompute & No \\
    
    \bottomrule
    \end{tabular}
    \label{tab:common hyperparams}
\end{table*}

\paragraph{Details on the Token MDP in Section~\ref{sec:alg}.}

We consider a simple token-level MDP defined over a fixed vocabulary $\mathcal{V} = {a,b,c}$ with a finite horizon $T=7$.
A trajectory corresponds to a sequence $y = (y_1, \dots, y_T) \in \mathcal{V}^T$, which can be interpreted as a string generated autoregressively.
At each timestep $t$, the state is given by the prefix $s_t = (y_1, \dots, y_{t-1})$, and the policy outputs a distribution over the next token.

The reward is defined as a sparse binary signal based on whether the generated sequence contains a predefined target pattern as a subsequence. 
Concretely, given a target string $y^\star = \texttt{abcabc}$, the reward function is
\[
R(y) := \mathbf{1}\big\{ y^\star \text{ is a subsequence of } y \big\}
\]
i.e., the reward is $1$ if the sequence $y$ contains $y^\star$ in order (not necessarily contiguously), and $0$ otherwise.
This reward structure introduces a long-horizon dependency, as achieving a nonzero reward requires consistently selecting tokens that match the target pattern throughout the sequence.

Both the rollout policy $\mu$ and the target policy $\pi$ are instantiated from the same parametric family, which we refer to as the \emph{target-following policy}.
Given a prefix $s$, let $k(s)$ denote the length of the longest prefix of $y^\star$ that appears as a subsequence in $s$.
The policy then assigns probability $\alpha$ to the next required target token $y^\star_{k(s)+1}$, and distributes the remaining mass uniformly over the other tokens:
\[
\pi_\alpha(v|s) = \begin{cases}
    \alpha & \text{if } v=y^\star_{k(s)+1} \\
    \frac{1-\alpha}{|\mathcal{V}|-1} & \text{otherwise.}
\end{cases}
\]
When the full target has already been matched ($k(s) = |y^\star|$), the policy becomes uniform over the vocabulary.

We construct $\mu$ and $\pi$ by choosing different values of $\alpha$, with $\alpha_\mu < \alpha_\pi$.
Intuitively, both policies are biased toward completing the target pattern, but $\pi$ is more strongly aligned with the optimal behavior.
In our experiments, we use $\alpha_\mu = 0.5$ and $\alpha_\pi = 0.8$.

\section{Detailed Token Analysis}
\label{app:token_full}

\raggedbottom

This appendix expands the analysis in Section~\ref{sec:analysis} along three directions.
Section~\ref{app:token_categories} gives the rule-based definitions of the token categories used throughout the analysis.
Section~\ref{app:token_full_aggregate} reports the full per-category correction strength of token-level weights under GRPO and \AlgName{} that is summarized in the main section.
Section~\ref{app:case_studies} presents representative rollout cases with token-level tables of the likelihood ratio $\rho_t$, the raw forward trace $\rawtrace$, the clipped trace $\cliptrace_{t+1}$, and the token advantage; orange highlights mark spans whose forward trace is clipped at either the upper or lower bound, and the highlight is meant to identify a local span of clipped tokens.

\subsection{Token Categories}
\label{app:token_categories}

Every vocabulary token is assigned to exactly one of eight categories by a rule-based classifier on the token string.

\begin{itemize}[leftmargin=1.2em,itemsep=1pt,topsep=2pt]
\item \textbf{Reasoning connectives.} Logical and discourse connectors that link reasoning steps within or across sentences, e.g.\ \textit{therefore, since, because, however, thus, also, first, next, finally, wait, actually}.
\item \textbf{Reasoning actions.} Verbs, modals, and pronouns that introduce a reasoning step or operation, e.g.\ \textit{we, let, can, need, find, solve, use, consider, assume, check, verify, suppose, given, define}.
\item \textbf{Function words.} Closed-class English function words: articles, prepositions, pronouns, and auxiliaries (\textit{the, of, to, is, and, that, which}).
\item \textbf{Math content nouns.} Mathematical content nouns and adjectives such as \textit{equation, function, variable, root, factor, triangle, integer, prime, solution}.
\item \textbf{Math variables.} Single-letter variable tokens such as \textit{x, y, n, k, b}, excluding single-letter English words \textit{a}, \textit{A}, and \textit{I}.
\item \textbf{Numeric.} Digits, decimals, arithmetic and comparison operators, LaTeX math commands such as \texttt{\textbackslash frac}, \texttt{\textbackslash sqrt}, \texttt{\textbackslash cdot}, and number words (\textit{one, two, hundred, half, quarter}).
\item \textbf{Formatting.} Whitespace, BPE prefix-only tokens, newlines, brackets, punctuation runs, markdown markers, special tokens, and LaTeX formatting commands such as \texttt{\textbackslash boxed}, \texttt{\textbackslash begin}, \texttt{\textbackslash text}, \texttt{\textbackslash left}, \texttt{\textbackslash right}.
\item \textbf{Other.} Residual BPE pieces and rare strings that do not match any of the above.
\end{itemize}

\begin{table}[h!]
\centering
\small
\setlength{\tabcolsep}{6pt}
\renewcommand{\arraystretch}{1.3}
\caption{
    Per-category correction strength of token-level weights.
}
\label{tab:appx_token_full}
\begin{tabular}{l c c}
\toprule
 & \multicolumn{2}{c}{Correction strength} \\
\cmidrule(lr){2-3}
Category & $\rho_t$ & $\rho_t \cdot \bar{\Gamma}^{(N,i)}_{t+1}$ \\
\midrule
All tokens (baseline)          & $0.02775$ & $0.07218$ \\
\midrule
Reasoning action               & $0.05167$ & $0.11118$ \\
Reasoning connective           & $0.05595$ & $0.10099$ \\
Function word                  & $0.03834$ & $0.09618$ \\
Math content noun              & $0.04701$ & $0.10218$ \\
Other (residual)               & $0.04025$ & $0.09044$ \\
\midrule
Math variable (single letter)  & $0.02387$ & $0.06332$ \\
Formatting                     & $0.02214$ & $0.06603$ \\
Numeric                        & $0.01789$ & $0.05477$ \\
\bottomrule
\end{tabular}
\end{table}

\subsection{Correction Strength of Per-Token Weights by Category}
\label{app:token_full_aggregate}

This subsection reports the full per-category correction strength.
The pool consists of approximately $3.4\mathrm{M}$ valid response tokens sampled during \AlgName{} training.
For each category, Table~\ref{tab:appx_token_full} reports the correction strength of $\rho_t$  and $\rho_t \cdot \bar{\Gamma}^{(N,i)}_{t+1}$ for \AlgName{}.

Two patterns are consistent across categories.
First, $\rho_t \cdot \bar{\Gamma}^{(N,i)}_{t+1}$ achieves substantially larger correction strength than $\rho_t$ in every category, confirming that the forward trace produces more active corrections than the likelihood ratio alone.
Second, reasoning-related categories (reasoning action, reasoning connective) and content-bearing categories (function word, math content noun) carry substantially larger correction strength than residual categories such as numeric, formatting, and single-letter math variables under both $\rho_t$ and $\rho_t \cdot \bar{\Gamma}^{(N,i)}_{t+1}$.
This pattern supports the claim that the forward trace amplifies corrections on the tokens that drive reasoning behavior.

\newcommand{\TokenTableAOne}{
\begingroup
\par\smallskip
\scriptsize
\setlength{\tabcolsep}{3.5pt}
\begin{xltabular}{\linewidth}{@{}r Y c c c c@{}}
\toprule
idx & token & $\rho_t$ & $\rawtrace$ & $\cliptrace_{t+1}$ & $\hat{A}_t$ \\
\midrule
\endfirsthead
\toprule
idx & token & $\rho_t$ & $\rawtrace$ & $\cliptrace_{t+1}$ & $\hat{A}_t$ \\
\midrule
\endhead
\midrule
\endfoot
\bottomrule
\endlastfoot
\cliprow
17 & \texttt{we}       & 1.000 & 2.67 & 1.400 & $+0.625$ \\
\cliprow
18 & \texttt{first}    & 0.976 & 2.74 & 1.400 & $+0.625$ \\
\cliprow
19 & \texttt{need}     & 1.000 & 2.74 & 1.400 & $+0.625$ \\
\cliprow
20 & \texttt{to}       & 1.000 & 2.77 & 1.400 & $+0.625$ \\
\cliprow
21 & \texttt{identify} & 1.000 & 4.77 & 1.400 & $+0.625$ \\
\cliprow
22 & \texttt{all}      & 2.737 & 1.74 & 1.400 & $+0.625$ \\
\cliprow
23 & \texttt{two}      & 1.000 & 1.74 & 1.400 & $+0.625$ \\
\cliprow
24 & \texttt{-digit}   & 1.000 & 1.74 & 1.400 & $+0.625$ \\
\cliprow
25 & \texttt{prime}    & 1.000 & 1.74 & 1.400 & $+0.625$ \\
\cliprow
26 & \texttt{numbers}  & 1.000 & 1.74 & 1.400 & $+0.625$ \\
\cliprow
27 & \texttt{that}     & 1.011 & 1.73 & 1.400 & $+0.625$ \\
28 & \texttt{remain}   & 1.725 & 1.00 & 1.000 & $+0.625$ \\
\end{xltabular}
\par\smallskip
\endgroup
}

\newcommand{\TokenTableATwo}{
\begingroup
\par\smallskip
\scriptsize
\setlength{\tabcolsep}{3.5pt}
\begin{xltabular}{\linewidth}{@{}r Y c c c c@{}}
\toprule
idx & token & $\rho_t$ & $\rawtrace$ & $\cliptrace_{t+1}$ & $\hat{A}_t$ \\
\midrule
\endfirsthead
\toprule
idx & token & $\rho_t$ & $\rawtrace$ & $\cliptrace_{t+1}$ & $\hat{A}_t$ \\
\midrule
\endhead
\midrule
\endfoot
\bottomrule
\endlastfoot
2860 & \tokdot{}\texttt{is}       & 0.965 & 0.97 & 0.973 & $+0.875$ \\
2861 & \tokdot{}\texttt{counted}  & 1.006 & 0.97 & 0.965 & $+0.875$ \\
2862 & \tokdot{}                  & 1.000 & 0.96 & 0.965 & $+0.875$ \\
\cliprow
2863 & \texttt{4}                 & 0.999 & 1.58 & 1.400 & $+0.875$ \\
\cliprow
2864 & \tokdot{}\texttt{times}    & 1.000 & 1.58 & 1.400 & $+0.875$ \\
\cliprow
2865 & \tokdot{}\texttt{in}       & 1.000 & 1.59 & 1.400 & $+0.875$ \\
\cliprow
2866 & \tokdot{}\texttt{total}    & 1.002 & 1.82 & 1.400 & $+0.875$ \\
\cliprow
2867 & \texttt{.\textbackslash n\textbackslash n} & 0.965 & 1.88 & 1.400 & $+0.875$ \\
\cliprow
2868 & \textbf{\texttt{Wait}}     & \textbf{0.998} & \textbf{2.94} & \textbf{1.400} & \textbf{$+0.875$} \\
\cliprow
2869 & \texttt{,}                 & 1.000 & 2.95 & 1.400 & $+0.875$ \\
\cliprow
2870 & \tokdot{}\texttt{let}      & 1.637 & 1.80 & 1.400 & $+0.875$ \\
\cliprow
2871 & \texttt{'s}                & 1.000 & 1.80 & 1.400 & $+0.875$ \\
\cliprow
2872 & \tokdot{}\texttt{think}    & 1.006 & 1.79 & 1.400 & $+0.875$ \\
\cliprow
2873 & \tokdot{}\texttt{differently} & 1.147 & 1.56 & 1.400 & $+0.875$ \\
\cliprow
2874 & \texttt{.\textbackslash n\textbackslash n} & 1.000 & 1.56 & 1.400 & $+0.875$ \\
2875 & \texttt{Total}             & 1.559 & 1.00 & 1.003 & $+0.875$ \\
2876 & \tokdot{}\texttt{color}    & 1.003 & 1.00 & 1.000 & $+0.875$ \\
2877 & \texttt{ings}              & 1.000 & 1.00 & 1.000 & $+0.875$ \\
2878 & \texttt{:}                 & 1.000 & 1.00 & 1.000 & $+0.875$ \\
2879 & \tokdot{}\texttt{\textbackslash(} & 1.000 & 1.00 & 1.000 & $+0.875$ \\
2880 & \texttt{2}                 & 1.000 & 1.00 & 1.000 & $+0.875$ \\
2881 & \texttt{\textasciicircum}  & 1.000 & 1.00 & 1.000 & $+0.875$ \\
2882 & \texttt{6}                 & 1.000 & 1.04 & 1.044 & $+0.875$ \\
2883 & \tokdot{}\texttt{=}        & 1.000 & 1.04 & 1.044 & $+0.875$ \\
2884 & \tokdot{}                  & 1.000 & 1.04 & 1.038 & $+0.875$ \\
\end{xltabular}
\par\smallskip
\endgroup
}

\newcommand{\TokenTableBOne}{
\begingroup
\par\smallskip
\scriptsize
\setlength{\tabcolsep}{3.5pt}
\begin{xltabular}{\linewidth}{@{}r Y c c c c@{}}
\toprule
idx & token & $\rho_t$ & $\rawtrace$ & $\cliptrace_{t+1}$ & $\hat{A}_t$ \\
\midrule
\endfirsthead
\toprule
idx & token & $\rho_t$ & $\rawtrace$ & $\cliptrace_{t+1}$ & $\hat{A}_t$ \\
\midrule
\endhead
\midrule
\endfoot
\bottomrule
\endlastfoot
\cliprow
636 & \texttt{Since}   & 1.000 & 2.66 & 1.400 & $+0.875$ \\
\cliprow
637 & \texttt{the}     & 1.000 & 2.66 & 1.400 & $+0.875$ \\
\cliprow
638 & \texttt{pattern} & 1.015 & 2.62 & 1.400 & $+0.875$ \\
\cliprow
639 & \texttt{repeats} & 1.000 & 2.62 & 1.400 & $+0.875$ \\
\cliprow
640 & \texttt{every}   & 1.000 & 2.62 & 1.400 & $+0.875$ \\
\cliprow
641 & \tokdot{}        & 1.000 & 2.62 & 1.400 & $+0.875$ \\
\cliprow
642 & \texttt{3}       & 1.000 & 2.62 & 1.400 & $+0.875$ \\
643 & \texttt{ex}      & 2.618 & 1.00 & 0.999 & $+0.875$ \\
644 & \texttt{ponents} & 1.000 & 0.97 & 0.967 & $+0.875$ \\
\end{xltabular}
\par\smallskip
\endgroup
}

\newcommand{\TokenTableBTwo}{
\begingroup
\par\smallskip
\scriptsize
\setlength{\tabcolsep}{3.5pt}
\begin{xltabular}{\linewidth}{@{}r Y c c c c@{}}
\toprule
idx & token & $\rho_t$ & $\rawtrace$ & $\cliptrace_{t+1}$ & $\hat{A}_t$ \\
\midrule
\endfirsthead
\toprule
idx & token & $\rho_t$ & $\rawtrace$ & $\cliptrace_{t+1}$ & $\hat{A}_t$ \\
\midrule
\endhead
\midrule
\endfoot
\bottomrule
\endlastfoot
\cliprow
1439 & \texttt{)}       & 1.000 & 2.95 & 1.400 & $+0.375$ \\
\cliprow
1440 & \texttt{(}       & 1.000 & 4.14 & 1.400 & $+0.375$ \\
\cliprow
1441 & \texttt{Invalid} & 1.000 & 4.14 & 1.400 & $+0.375$ \\
\cliprow
1442 & \texttt{)\textbackslash n\textbackslash n} & 1.000 & 4.14 & 1.400 & $+0.375$ \\
\cliprow
1443 & \texttt{So}      & 1.000 & 4.14 & 1.400 & $+0.375$ \\
\cliprow
1444 & \texttt{,}       & 1.000 & 4.14 & 1.400 & $+0.375$ \\
\cliprow
1445 & \texttt{the}     & 1.000 & 4.14 & 1.400 & $+0.375$ \\
\cliprow
1446 & \texttt{valid}   & 2.954 & 1.40 & 1.400 & $+0.375$ \\
1447 & \texttt{pairs}   & 1.404 & 1.00 & 1.000 & $+0.375$ \\
1448 & \texttt{are}     & 1.000 & 1.00 & 1.000 & $+0.375$ \\
1449 & \texttt{:}       & 1.000 & 1.00 & 1.000 & $+0.375$ \\
\end{xltabular}
\par\smallskip
\endgroup
}

\newcommand{\TokenTableDOne}{
\begingroup
\par\smallskip
\scriptsize
\setlength{\tabcolsep}{3.5pt}
\begin{xltabular}{\linewidth}{@{}r Y c c c c@{}}
\toprule
idx & token & $\rho_t$ & $\rawtrace$ & $\cliptrace_{t+1}$ & $\hat{A}_t$ \\
\midrule
\endfirsthead
\toprule
idx & token & $\rho_t$ & $\rawtrace$ & $\cliptrace_{t+1}$ & $\hat{A}_t$ \\
\midrule
\endhead
\midrule
\endfoot
\bottomrule
\endlastfoot
867 & \texttt{\{}              & 1.000 & 0.90 & 0.900 & $-0.500$ \\
\cliprow
868 & \texttt{7}                & 1.000 & 0.70 & 0.800 & $-0.500$ \\
\cliprow
869 & \texttt{\}}              & 1.000 & 0.70 & 0.800 & $-0.500$ \\
\cliprow
870 & \tokdot{}\texttt{\textbackslash} & 1.000 & 0.70 & 0.800 & $-0.500$ \\
\cliprow
871 & \texttt{]\textbackslash n\textbackslash n} & 1.000 & 0.70 & 0.800 & $-0.500$ \\
\cliprow
872 & \textbf{\texttt{However}} & \textbf{0.998} & \textbf{0.56} & \textbf{0.800} & \textbf{$-0.500$} \\
\cliprow
873 & \texttt{,}                & 1.000 & 0.56 & 0.800 & $-0.500$ \\
\cliprow
874 & \tokdot{}\texttt{this}    & 0.901 & 0.62 & 0.800 & $-0.500$ \\
\cliprow
875 & \tokdot{}\texttt{does}    & 0.782 & 0.80 & 0.800 & $-0.500$ \\
\cliprow
876 & \tokdot{}\texttt{not}     & 1.000 & 0.80 & 0.800 & $-0.500$ \\
\cliprow
877 & \tokdot{}\texttt{fit}     & 1.000 & 0.80 & 0.800 & $-0.500$ \\
\cliprow
878 & \tokdot{}\texttt{the}     & 0.999 & 0.80 & 0.800 & $-0.500$ \\
879 & \tokdot{}\texttt{context} & 0.798 & 0.98 & 0.984 & $-0.500$ \\
880 & \tokdot{}\texttt{of}      & 0.999 & 1.05 & 1.047 & $-0.500$ \\
881 & \tokdot{}\texttt{the}     & 1.000 & 1.05 & 1.047 & $-0.500$ \\
882 & \tokdot{}\texttt{problem} & 1.000 & 1.05 & 1.047 & $-0.500$ \\
883 & \texttt{.}                & 0.999 & 1.08 & 1.083 & $-0.500$ \\
884 & \tokdot{}\texttt{Let}     & 1.000 & 1.05 & 1.052 & $-0.500$ \\
885 & \texttt{'s}               & 1.000 & 1.05 & 1.052 & $-0.500$ \\
886 & \tokdot{}\texttt{re}      & 0.986 & 1.05 & 1.052 & $-0.500$ \\
887 & \texttt{-e}               & 1.063 & 0.94 & 0.942 & $-0.500$ \\
888 & \texttt{valuate}          & 1.000 & 0.94 & 0.942 & $-0.500$ \\
\end{xltabular}
\par\smallskip
\endgroup
}

\subsection{Case study}
\label{app:case_studies}

This subsection presents representative cases from rollouts during \AlgName{} training.
The first four cases show situations where the forward trace amplifies the update, and the model is rewarded more strongly than the local ratio alone would suggest. These spans typically initiate or transition between reasoning steps, and the forward trace pushes the policy further in their direction.
The last case shows the opposite situation, where the forward trace dampens the update on a failed trajectory. Even when the final answer is wrong, spans that contain valid reasoning attempts receive a softer penalty, so that the update does not punish the reasoning move itself.

\begin{tracecasebox}{Case 1: Goal definition}
\textbf{Prompt.}
Palindromic primes are two-digit prime numbers whose reversed digits also form a prime.
Find the sum of all palindromic primes less than $50$.

\begin{traceexcerpt}
\textbf{Response excerpt.}

To find the sum of all palindromic primes less than $50$,
\clipToken{we} \clipToken{first} \clipToken{need} \clipToken{to}
\clipToken{identify} \clipToken{all} \clipToken{two}\clipToken{-digit}
\clipToken{prime} \clipToken{numbers} \clipToken{that}
remain prime when their digits are reversed. \ldots
\end{traceexcerpt}

\TokenTableAOne

The highlighted span is an early planning phrase, not part of the final answer.
Several tokens have local ratios close to one, but their forward traces are large enough to hit the upper clipping bound because the continuation opened by this subgoal becomes substantially more likely under the updated policy.
This shows that the forward trace captures a span-level signal that is mostly invisible to the local token ratio alone.

\end{tracecasebox}

\begin{tracecasebox}{Case 2: Self-correction to an alternative approach}

\textbf{Prompt.}
Each face of a cube is painted either red or blue, each with probability $1/2$.
The color of each face is determined independently.
What is the probability that the painted cube can be placed on a horizontal surface so that the four vertical faces are all the same color?

\begin{traceexcerpt}
\textbf{Response excerpt.}

we can find a smarter way.

\medskip
\noindent Considering the Complementary Event:

Let's think about the complementary event: it is counted
\clipToken{4} \clipToken{times} \clipToken{in} \clipToken{total}\clipToken{.}

\clipToken{Wait}\clipToken{,}
\clipToken{let}\clipToken{'s} \clipToken{think} \clipToken{differently}\clipToken{.}

Total colorings: \(2^6=\cdots\).
\end{traceexcerpt}

\TokenTableATwo

This case shows a clear self-correction in the reasoning trajectory.
The model first attempts a complementary-event argument, notices that the counting structure is becoming awkward, and then switches strategy with the phrase ``Wait, let's think differently.''
The focus token \texttt{Wait} has an almost unchanged local ratio ($\rho_t=0.998$), but its raw forward trace rises to $\rawtrace=2.94$ and is clipped to $\cliptrace_{t+1}=1.400$.
This is a case where the local ratio fails to capture but the forward trace captures.

\end{tracecasebox}

\begin{tracecasebox}{Case 3: Causal connector}

\textbf{Prompt.}
Find the remainder when $1+2+2^2+\cdots+2^{100}$ is divided by $7$.

\begin{traceexcerpt}
\textbf{Response excerpt.}

\ldots{} the exponent modulo $3$.
\clipToken{Since} \clipToken{the} \clipToken{pattern}
\clipToken{repeats} \clipToken{every} \clipToken{3}
exponents, we can find $2^{101}\bmod 7$. \ldots
\end{traceexcerpt}

\TokenTableBOne

The highlighted span is a causal transition into the periodicity argument.
The tokens themselves do not perform the computation, but together they mark the step where a modular pattern becomes the final answer.
The forward trace assigns high weight to this connector span, even though the local ratios alone leave it nearly unchanged.

\end{tracecasebox}

\begin{tracecasebox}{Case 4: Consolidation connector}

\textbf{Prompt.}
A standard six-sided die is rolled twice.
What is the probability that the difference between the two rolls is $2$?

\begin{traceexcerpt}
\textbf{Response excerpt.}

\ldots{} $y=4,8$)
\clipToken{(}\clipToken{Invalid}\clipToken{)}
\clipToken{So}\clipToken{,} \clipToken{the} \clipToken{valid}
pairs are: $(1,3), (2,4), \ldots$
\end{traceexcerpt}

\TokenTableBTwo

The highlighted span is a consolidation connector that moves from rejecting invalid outcomes to listing the valid pairs.
The forward trace weights this consolidation step as a whole rather than to any single token.

\end{tracecasebox}

\begin{tracecasebox}{Case 5: Self-correction in a failed trajectory}

\textbf{Prompt.}
Either increasing the radius or the height of a cylinder by six inches will result in the same volume.
The original height of the cylinder is two inches.
What is the original radius in inches?

\begin{traceexcerpt}
\textbf{Response excerpt.}

\ldots{} $2\sqrt{7}\,]$
\clipToken{However}\clipToken{,}
\clipToken{this} \clipToken{does} \clipToken{not}
\clipToken{fit} \clipToken{the}
context of the problem. Let's re-evaluate \ldots
\end{traceexcerpt}

\TokenTableDOne

This case is the symmetric counterpart of Case~2.
The trajectory ultimately fails (reward $0$) and the global advantage is therefore negative ($\hat{A}_t = -0.500$).
A purely local surrogate would penalize every token in the trajectory by the same advantage scale.
The forward trace, however, drops to $\rawtrace = 0.56$ on the token \texttt{However} and stays low across the whole self-correction phrase ``However, this does not fit the context $\ldots$ Let's re-evaluate''; the clipped trace at the lower bound $0.8$ still reduces the effective gradient on this span by $20\%$ relative to the local-only update.

\end{tracecasebox}

\section{Detailed Experimental Results}
\label{app:detailed experimental results}

This section reports the detailed learning dynamics of the experiments in Section~\ref{sec:main_results}, along with per-benchmark performance comparison of the ablation experiments in Section~\ref{sec:ablation}.

\begin{itemize}
    \item Figures~\ref{fig:learning_dynamics_8b} and~\ref{fig:learning_dynamics_1.7b} compare the training dynamics of GRPO, DPPO, and \AlgName{} on Qwen3-8B-Base and Qwen3-1.7B-Base, respectively, showing reward, mean response length, KL divergence, and per-benchmark accuracy throughout training.
    
    \item Table~\ref{tab:ablation_n} reports the effect of the trace horizon $N$, where $N{=}4$ achieves the best average score and all configurations with $N \geq 2$ outperform shorter horizons.

    \item Table~\ref{tab:clip_range} reports the effect of forward trace clip ranges, showing that performance is robust to the choice of clip range.
    
    \item Table~\ref{tab:design choice} shows the effect of recomputing the log probability $\mu(y_t |s_t)$ at each gradient step, which causes a slight performance drop and additional computational cost. Additionally, it reports the effect of clipping $\rho_t$, showing that removing this clipping yields nearly identical performance.

    \item Finally, Table~\ref{tab:trust_region_compat} compares different token mask strategies for \AlgName{}, including the PPO clip, the token masks based on binary total variation (TV) and KL divergence, and IcePop's token mask.
\end{itemize}

\begin{figure}[H]
    \centering
    \includegraphics[width=\linewidth]{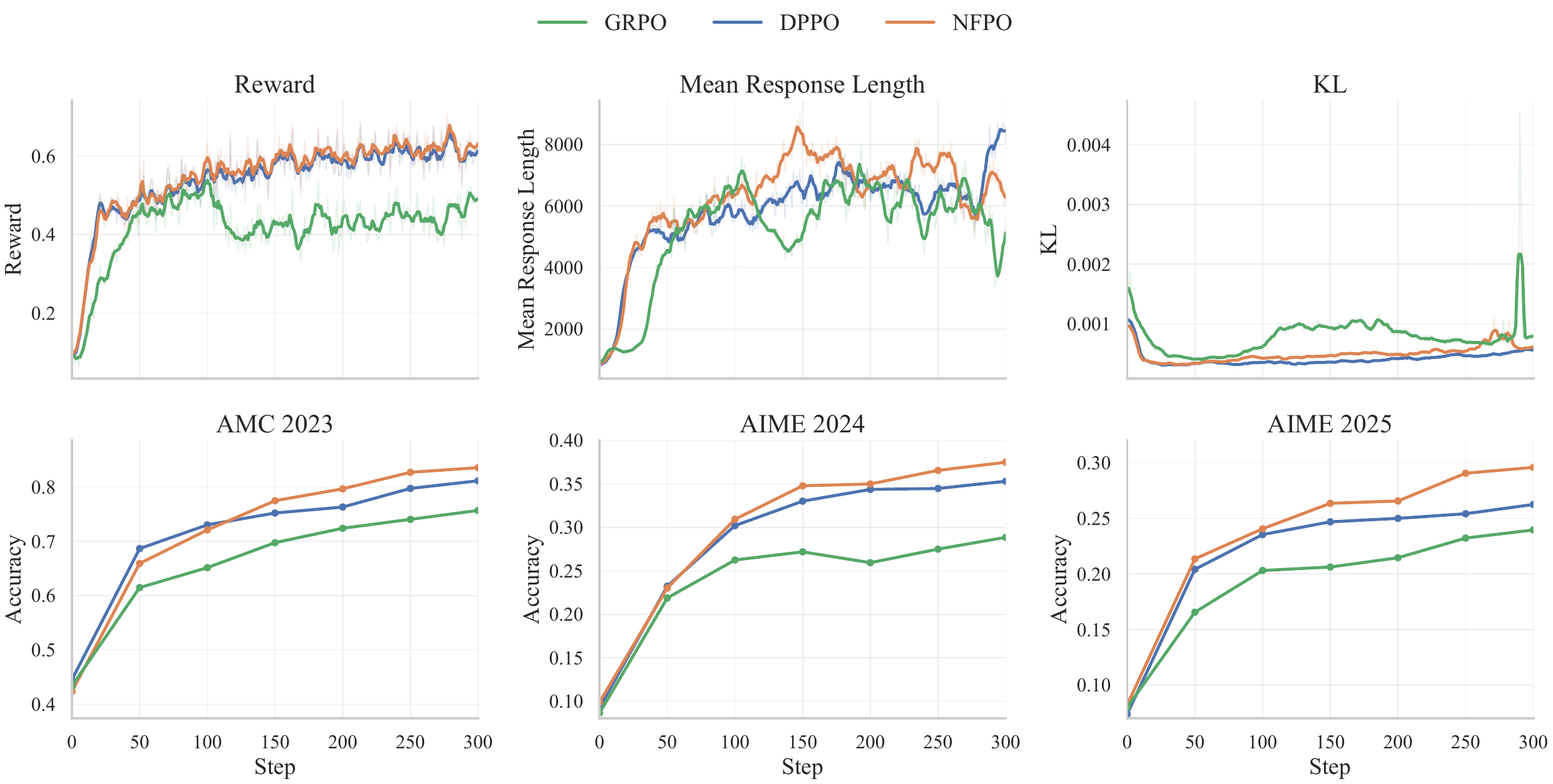}
    \caption{
    \small{Learning dynamics on Qwen3-8B-Base. Top row: training reward, mean response length, and KL divergence. Bottom row: per-benchmark accuracy.}}
    \label{fig:learning_dynamics_8b}
\end{figure}

\begin{figure}[H]
    \centering
    \includegraphics[width=\linewidth]{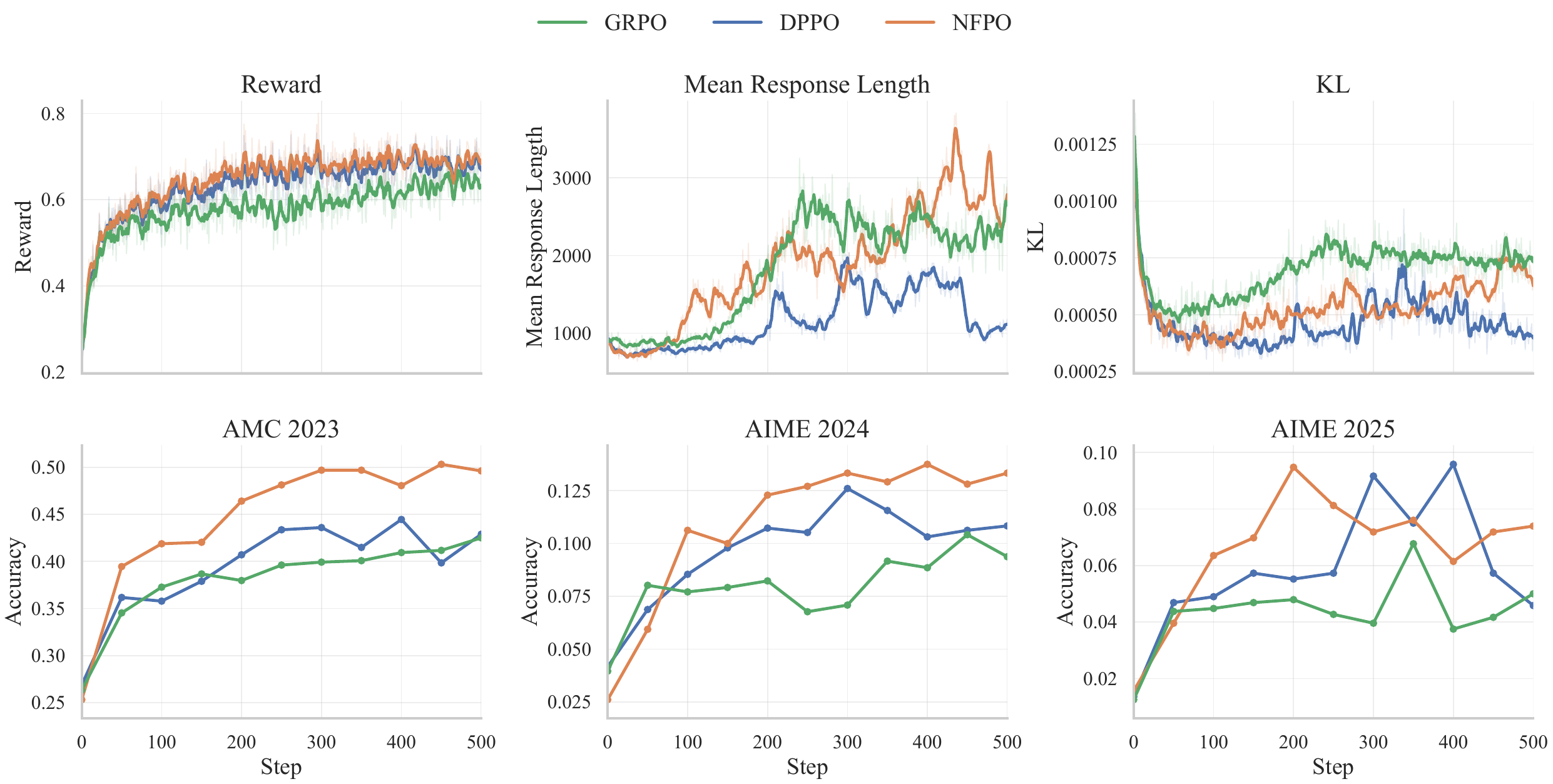}
    \caption{
    \small{Learning dynamics on Qwen3-1.7B-Base. Top row: training reward, mean response length, and KL divergence. Bottom row: per-benchmark accuracy.}}
    \label{fig:learning_dynamics_1.7b}
\end{figure}

\begin{table}[H]
    \centering
    \caption{
    \small{Performance comparison across different values of $N$ on mathematical reasoning benchmarks.}}
    
    \setlength{\tabcolsep}{4pt}
    \small 
    
    \begin{tabular}{@{}c | c c c c c c c | c@{}}
        \toprule
        \textbf{} & AIME24 & AIME25 & AIME26 & AMC23 & MATH 500 & Minerva & Oly. & Avg. \\
        \midrule
        $N=1$ (DPPO) & $0.106$ & $0.066$ & $0.056$ & $0.424$ & $0.715$ & $0.246$ & $0.341$ & $0.279$ \\
        $N=2$ & $0.114$ & $0.082$ & $0.053$ & $0.431$ & $0.718$ & $0.246$ & $0.344$ & $0.284$ \\
        $N=4$ & $0.133$ & $0.074$ & $0.081$ & $0.496$ & $0.745$ & $0.262$ & $0.360$ & $0.307$ \\
        $N=8$ & $0.124$ & $0.114$ & $0.050$ & $0.473$ & $0.748$ & $0.244$ & $0.355$ & $0.301$ \\
        $N=16$ & $0.105$ & $0.078$ & $0.079$ & $0.433$ & $0.749$ & $0.254$ & $0.370$ & $0.296$ \\
        \bottomrule
    \end{tabular}
    \label{tab:ablation_n}
\end{table}

\begin{table}[H]
    \centering
    \caption{
    \small{Ablation study on the forward trace clip ranges for \AlgName{}.}}
    
    \setlength{\tabcolsep}{4pt}
    \small 
    
    \begin{tabular}{@{}c | c c c c c c c | c@{}}
        \toprule
        \textbf{Clip range} & AIME24 & AIME25 & AIME26 & AMC23 & MATH 500 & Minerva & Oly. & Avg. \\
        \midrule
        $[0.6, 1.2]$ & $0.117$ & $0.067$ & $0.050$ & $0.504$ & $0.737$ & $0.270$ & $0.359$ & $0.300$ \\
        $[0.8, 1.2]$ & $0.123$ & $0.071$ & $0.079$ & $0.493$ & $0.731$ & $0.247$ & $0.358$ & $0.300$ \\
        $[0.8, 1.4]$ & $0.133$ & $0.074$ & $0.081$ & $0.496$ & $0.745$ & $0.262$ & $0.360$ & $0.307$ \\
        \bottomrule
    \end{tabular}
    \label{tab:clip_range}
\end{table}

\begin{table}[H]
    \centering
    \caption{
    \small{Ablation study on the log probability recompute and likelihood ratio ($\rho_t$) clip.}}
    
    \setlength{\tabcolsep}{4pt}
    \small 
    
    \begin{tabular}{@{}r | c c c c c c c | c@{}}
        \toprule
        \textbf{} & AIME24 & AIME25 & AIME26 & AMC23 & MATH 500 & Minerva & Oly. & Avg. \\
        \midrule
        NFPO & $0.133$ & $0.074$ & $0.081$ & $0.496$ & $0.745$ & $0.262$ & $0.360$ & $0.307$ \\
        w/ recompute  & $0.132$ & $0.081$ & $0.049$ & $0.440$ & $0.734$ & $0.257$ & $0.355$ & $0.293$ \\
        w/o $\rho_t$ clip & $0.135$ & $0.103$ & $0.076$ & $0.501$ & $0.739$ & $0.234$ & $0.361$ & $0.307$ \\
        \bottomrule
    \end{tabular}
    \label{tab:design choice}
\end{table}

\begin{table}[H]
    \centering
    \caption{
    \small{Ablation study on \AlgName{}'s token mask.}}
    
    \setlength{\tabcolsep}{4pt}
    \small 
    
    \begin{tabular}{@{}l | c c c c c c c | c@{}}
        \toprule
        \textbf{Mask Type} & AIME24 & AIME25 & AIME26 & AMC23 & MATH 500 & Minerva & Oly. & Avg. \\
        \midrule
        TV & $0.133$ & $0.074$ & $0.081$ & $0.496$ & $0.745$ & $0.262$ & $0.360$ & $0.307$ \\
        KL & $0.044$ & $0.067$ & $0.128$ & $0.439$ & $0.718$ & $0.237$ & $0.336$ & $0.281$ \\
        PPO  & $0.096$ & $0.054$ & $0.043$ & $0.424$ & $0.690$ & $0.229$ & $0.323$ & $0.266$ \\
        IcePop & $0.087$ & $0.036$ & $0.033$ & $0.417$ & $0.686$ & $0.229$ & $0.313$ & $0.258$ \\
        \bottomrule
    \end{tabular}
    \label{tab:trust_region_compat}
\end{table}

\end{document}